\documentclass[11pt]{article} 
\usepackage{newtxtext}
\usepackage[left=1.25in, top=1in, bottom=1in, right=1.25in]{geometry}





\usepackage{graphicx}
\usepackage{float}
\usepackage[utf8]{inputenc} 
\usepackage[T1]{fontenc}    
\usepackage{url}            
\usepackage{booktabs}       
\usepackage{amsfonts}       
\usepackage{nicefrac}       
\usepackage{xcolor}         

\usepackage[colorlinks=true,linkcolor=magenta,citecolor=blue]{hyperref}%

\newcommand\numberthis{\addtocounter{equation}
{1}\tag{\theequation}}

\newcommand{\tr}{\textup{tr}}
\hypersetup{backref=page}
\usepackage{url}            
\usepackage{nicefrac}       
\usepackage{xcolor}  
\usepackage{amsfonts, amsmath, amssymb, bm, mathtools}
\usepackage{cleveref}
\usepackage{dsfont}
\usepackage{cancel}
\usepackage{enumitem}
\usepackage{natbib}
\usepackage{thmtools} 
\usepackage{thm-restate}
\usepackage{caption}
\usepackage{nicefrac}
\usepackage{xfrac}

\usepackage{amsmath,amsfonts,bm}

















\def\1{\bm{1}}










 \def\mW{{\bm{W}}}

\DeclareMathAlphabet{\mathsfit}{\encodingdefault}{\sfdefault}{m}{sl}
\SetMathAlphabet{\mathsfit}{bold}{\encodingdefault}{\sfdefault}{bx}{n}


\def\gG{{\mathcal{G}}}
\def\gH{{\mathcal{H}}}

\def\gL{{\mathcal{L}}}

\def\gN{{\mathcal{N}}}

\def\gR{{\mathcal{R}}}



\def\sR{{\mathbb{R}}}










\DeclareMathOperator*{\argmin}{arg\,min}

\DeclareMathOperator{\Exp}{\mathop \mathbb{E}}


\newcommand{\ind}{\mathds{1}}  




\newcommand{\norm}[1]{\left\|#1\right\|}
\newcommand{\snorm}[2]{\left\|#1\right\|_{S_{#2}}}



\newcommand{\inner}[2]{\left\langle #1,#2 \right\rangle}

\newcommand{\diff}{\mathrm{d}}

\newcommand*{\E}{\E}







\newcommand{\Lbar}{\bar{\mathcal L}}

\newcommand{\zhiyuan}[1]{{\color{red} [ZL: #1]}}

\usepackage{amsthm}
\newtheorem{definition}{Definition}
\newtheorem{theorem}{Theorem}
\newtheorem{lemma}{Lemma}

\newtheorem{example}{Example}
\newtheorem{corollary}{Corollary}

\title{The Inductive Bias of Flatness Regularization \\ for Deep Matrix Factorization}

%

\author{%
    Khashayar Gatmiry \\ 
    MIT\\
    \texttt{gatmiry@mit.edu }
    \and  
     Zhiyuan Li \\
    Stanford University \\
    \texttt{zhiyuanli@stanford.edu}
\and
     Ching-Yao Chuang \\
     MIT \\
     \texttt{cychuang@mit.edu}
     \and 
     Sashank Reddi \\
     Google \\
     \texttt{sashank@google.com}
     \and 
     Tengyu Ma \\
     Stanford University\\
     \texttt{tengyuma@stanford.edu}
      \and 
      Stefanie Jegelka \\
      MIT \\
      \texttt{stefje@csail.mit.edu}
}
\date{}

\begin{document}

\maketitle

\begin{abstract}
Recent works on over-parameterized neural networks have shown that  the stochasticity in optimizers has the implicit regularization effect of minimizing the sharpness of the loss function (in particular, the trace of its Hessian) over the family zero-loss solutions. More explicit forms of flatness regularization also empirically improve the generalization performance. However, it remains unclear why and when flatness regularization leads to better generalization. 
This work takes the first step toward understanding the inductive bias of the minimum trace of the Hessian solutions in an important setting: learning deep linear networks from linear measurements, also known as \emph{deep matrix factorization}. We show that for all depth greater than one, with the standard Restricted Isometry Property (RIP) on the measurements, minimizing the trace of Hessian is approximately equivalent to minimizing the Schatten 1-norm of the corresponding end-to-end matrix parameters (i.e., the product of all layer matrices), which in turn leads to better generalization. We empirically verify our theoretical findings on synthetic datasets.

\end{abstract}


\section{Introduction}
Modern deep neural networks are typically over-parametrized and equipped with huge model capacity, but surprisingly, they generalize well when trained using stochastic gradient descent (SGD) or its variants~\citep{zhang2017understanding}. A recent line of research suggested the \emph{implicit bias} of SGD as a possible explanation to this mysterious ability.
In particular, \citet{damian2021label,li2021happens,arora2022understanding,lyu2022understanding,wen2022does,Liu2022SamePL} have shown that SGD can implicitly minimize the \emph{sharpness} of the training loss, in particular, the trace of the Hessian of the training loss, to obtain the final model. However, despite the strong empirical evidence on the correlation between various notions of sharpness and generalization~\citep{keskar2016large,jastrzkebski2017three,neyshabur2017exploring,jiang2019fantastic} and the effectiveness of using sharpness regularization on improving generalization~\citep{foret2020sharpness,wu2020adversarial,zheng2021regularizing,norton2021diametrical}, the connection between penalization of the sharpness of training loss and better generalization still remains majorly unclear~\citep{dinh2017sharp,andriushchenko2023modern} and has only been proved in the context of two-layer linear models~\citep{li2021happens,nacson2022implicit,ding2022flat}. To further understand this connection beyond the two layer case, we study the inductive bias of penalizing the \emph{trace of the Hessian} of training loss and its effect on the \emph{generalization} in an important theoretical deep learning setting: \emph{deep linear networks} (or equivalently, \emph{deep matrix factorization}~\citep{arora2019implicit}). We start by briefly describing the problem setup.

\paragraph{Deep Matrix Factorization.} Consider an $L$-layer deep network where $L\in \mathbb{N}^+, L\ge 2$ is the depth of the model. Let $W_i\in\mathbb{R}^{d_i\times d_{i-1}}$ and $d_i$ denote the layer weight matrix and width of the $i^{\text{th}}$ ($i \in [L]$) layer respectively. We use $\mathbf{W}$ to denote the concatenation of all the parameters $(W_1,\ldots, W_L)$ and define the \emph{end-to-end matrix} of $\mathbf{W}$ as 
\begin{align}
    E(\mathbf{W}) \triangleq W_LW_{L-1}\cdots W_1.\label{eq:Edefinition}
\end{align}

In this paper, we focus on models that are linear in the space of the end-to-end matrix $E(W)$. Suppose $M^*\in \mathbb{R}^{d_L\times d_0}$ is the target end-to-end matrix, and we observe $n$ linear measurements (matrices) $A_i\in \mathbb{R}^{d_L\times d_0}$ and the corresponding labels $b_i = \inner{A_i}{M^*}$. 
The training loss of $\mathbf{W}$ is the mean-squared error (MSE) between the prediction $\inner{A_i}{W_LW_{L-1}\cdots W_1}$ and the observation $b_i$:
\begin{align}
    \gL(\mathbf{W})\triangleq \frac{1}{n}\sum_{i=1}^n\left(\inner{A_i}{W_LW_{L-1}\cdots W_1}-b_i\right)^2.\label{eq:lmp}
\end{align}
Throughout this paper, we assume that  $d_i\ge \min (d_0,d_L)$ for each $i\in[L]$ and, thus, the image of the function $E(\cdot)$ is the entire $\mathbb{R}^{d_L\times d_0}$. In particular, this ensures that the deep models are sufficiently expressive in the sense that $\min\limits_{\mW}\gL(\mW) = 0$. For this setting, we aim to understand the structure of the trace of the Hessian minimization, as described below. The trace of Hessian is the sum of the eigenvalues of Hessian, which is an indicator of sharpness and it is known that variants of SGD, such as label noise SGD or 1-SAM, are biased toward models with a smaller trace of Hessian~\citep{li2021happens,wen2022does}. 

{\bf Min Trace of Hessian Interpolating Solution.} Our primary object of study is the interpolating solution with the minimum trace of Hessian, defined as:  
\begin{align}\label{eq:min_trace_of_hessian_interpolating_soln}
    \mW^* \in \argmin_{\mW: \gL(\mW)=0} \tr[\nabla^2\gL(\mW)]. 
\end{align}
As we shall see shortly, the solution to the above optimization problem is not unique. We are interested in understanding the underlying structure of any minimizer $\mW^*$. This will, in turn, inform us about the generalization nature of these solutions.


\subsection{Main Results}\label{sec:our_results}

Before delving into the technical details, we state our main results in this section. This also serves the purpose of highlighting the primary technical contributions of the paper.
First, since the generalization of $\mW$ only depends on its end-to-end matrix $E(\mW)$, it is informative to derive the properties of $E(\mW^*)$ for any min trace of the Hessian interpolating solution $\mW^*$ defined in \eqref{eq:min_trace_of_hessian_interpolating_soln}. Indeed, penalizing the trace of Hessian in the $W$ space induces an equivalent penalization in the space of the end-to-end parameters. More concretely, given an end-to-end parameter $M$, let the induced regularizer $F(M)$ denote the trace of Hessian of the training loss at $\mW$ among all $\mW$'s that instantiate the end-to-end matrix $M$ i.e., $E(\mW) = M$. 
\begin{definition}[Induced Regularizer]\label[definition]{defi:induced_regularizer} Suppose $M\in\mathbb{R}^{d_L\times d_0}$ is an end-to-end parameter that fits the training data perfectly (that is, $\inner{A_i}{M}=b_i,\ \forall i\in[n]$). We define the \emph{induced regularizer} as 
    \begin{align}
        F(M) \triangleq \min_{\mW:E(\mW)=M} \tr[\nabla^2 \gL(\mW)]\label{eq:Fdefinition1} 
    \end{align}
\end{definition}
Since the image of $E(\cdot)$ is the entire $\mathbb{R}^{d_L\times d_{0}}$ by our assumption that $d_i \geq \min(d_0, d_L)$,  function $F$ is well-defined for all $M \in \mathbb{R}^{d_L \times d_0}$. It is easy to see that minimizing the trace of the Hessian in the original  parameter space (see \eqref{eq:min_trace_of_hessian_interpolating_soln}) is equivalent to penalizing $F(M)$ in the end-to-end parameter. Indeed, the minimizers of the implicit regularizer in the end-to-end space are related to the minimizers of the implicit regularizer in the $\mW$ space, i.e., 
\begin{align*}
\argmin\limits_{M: \gL'(M)=0} F(M) =\left\{ E(\mW^*)\mid \mW^* \in \argmin\limits_{\mW: \gL(\mW)=0} \tr[\nabla^2\gL(\mW)] \right\}, 
\end{align*}
where for any $M\in \mathbb{R}^{d_L\times d_0}$, we define $\gL'(M) \triangleq \frac{1}{n}\sum_{i=1}\left(\inner{A_i}{M}-b_i\right)^2$ and thus $\gL(\mW) = \gL'(E(\mW))$. This directly follows from the definition of $F$ in \eqref{eq:Fdefinition1}. 
Our main result characterizes the induced regularizer $F(M)$ when the data satisfies the RIP property. 

\begin{theorem}[Induced regularizer under RIP]\label{thm:RIPbias}
   Suppose the linear measurements $\{A_i\}_{i=1}^n$ satisfy the $(1,\delta)$-RIP condition. 
   \begin{enumerate}
   	\item For any $M\in\mathbb{R}^{d_L\times d_0}$ such that $\inner{A_i}{M}=b_i,\ \forall i\in[n]$, it holds that 
    \begin{align}
       (1-\delta) L{(d_0 d_L)}^{1/L} \|M\|_*^{2(L-1)/L} \le F(M)\le (1+\delta) L{(d_0 d_L)}^{1/L} \|M\|_*^{2(L-1)/L}.
    \end{align}
   	\item     Let $\mW^* \in \argmin_{\mW: \gL(\mW)=0} \tr[\nabla^2\gL(\mW)]$ be an interpolating solution with minimal trace of Hessian . Then $E(\mW^*)$  roughly minimizes the nuclear norm among all interpolating solutions of $\gL'$. That is,
    \begin{align*}
        \| E(\mW^*)\|_* \leq \frac{1 + \delta}{1-\delta} \min_{ \gL'(M)=0}\|M\|_*.
    \end{align*}
   \end{enumerate}
\end{theorem}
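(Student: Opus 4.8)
The plan is to evaluate the induced regularizer $F(M)$ essentially in closed form by stripping away, in three stages, the residuals, then the measurements, and finally the factorization. Fix any $\mW$ with $\gL(\mW)=0$, so every residual $r_i(\mW)\triangleq\inner{A_i}{E(\mW)}-b_i$ vanishes. Since $\gL$ is a sum of squares of the $r_i$, only the Gauss--Newton outer product survives in $\nabla^2\gL(\mW)$, so $\tr[\nabla^2\gL(\mW)]$ equals (up to the fixed normalization of $\gL$ in \eqref{eq:lmp}) the quantity $\frac1n\sum_{i=1}^n\|\nabla_{\mW}r_i(\mW)\|_F^2$. Writing $P_{>j}\triangleq W_L\cdots W_{j+1}$ and $P_{<j}\triangleq W_{j-1}\cdots W_1$, with empty products read as identity matrices, the chain rule gives $\nabla_{W_j}\inner{A_i}{E(\mW)}=P_{>j}^\top A_iP_{<j}^\top$, hence
\[
\tr[\nabla^2\gL(\mW)] \;\propto\; \sum_{j=1}^{L}\frac1n\sum_{i=1}^{n}\big\|P_{>j}^\top A_iP_{<j}^\top\big\|_F^2 .
\]

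Next I would delete the $A_i$ using the rank-one RIP. Letting $u_1,\dots,u_{d_j}$ be the columns of $P_{>j}$ and $v_1^\top,\dots,v_{d_{j-1}}^\top$ the rows of $P_{<j}$, one has $(P_{>j}^\top A_iP_{<j}^\top)_{k\ell}=\inner{A_i}{u_kv_\ell^\top}$, so $\|P_{>j}^\top A_iP_{<j}^\top\|_F^2=\sum_{k,\ell}\inner{A_i}{u_kv_\ell^\top}^2$ is a sum of squared measurements of the \emph{rank-one} matrices $u_kv_\ell^\top$. Applying $(1,\delta)$-RIP termwise, summing over $i,k,\ell$, and using $\sum_{k,\ell}\|u_k\|^2\|v_\ell\|^2=\|P_{>j}\|_F^2\|P_{<j}\|_F^2$ gives
\[
(1-\delta)\|P_{>j}\|_F^2\|P_{<j}\|_F^2 \;\le\; \frac1n\sum_{i=1}^n\big\|P_{>j}^\top A_iP_{<j}^\top\big\|_F^2 \;\le\; (1+\delta)\|P_{>j}\|_F^2\|P_{<j}\|_F^2 .
\]
Summing over $j$ and minimizing over $\{\mW:E(\mW)=M\}$, Part 1 reduces (up to the $(1\pm\delta)$ factors) to the identity $\min_{\mW:\,E(\mW)=M}\sum_{j=1}^L\|P_{>j}\|_F^2\|P_{<j}\|_F^2 = L(d_0d_L)^{1/L}\|M\|_*^{2(L-1)/L}$.

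To prove this identity, for the lower bound I would use that $P_{>j}P_{<j+1}=M$ for $1\le j\le L-1$, together with $\|P_{<1}\|_F^2=d_0$ and $\|P_{>L}\|_F^2=d_L$. Submultiplicativity $\|AB\|_*\le\|A\|_F\|B\|_F$ gives $\|P_{>j}\|_F\|P_{<j+1}\|_F\ge\|M\|_*$; multiplying over $j=1,\dots,L-1$ and reinserting the two boundary factors yields $\prod_{j=1}^L\|P_{>j}\|_F^2\|P_{<j}\|_F^2\ge d_0d_L\|M\|_*^{2(L-1)}$, and AM--GM over the $L$ summands finishes the lower bound. For the matching upper bound I would exhibit one factorization: align the layers with the SVD $M=\sum_k\sigma_ku_kv_k^\top$, make the first and last layers have gain $\propto\sqrt{\sigma_k}$ along the $k$-th singular direction and make every interior layer a scalar multiple of a (rectangular) identity, which leaves two global scalars; a short Lagrange/AM--GM computation picks these scalars so that all $L$ terms $\|P_{>j}\|_F^2\|P_{<j}\|_F^2$ become simultaneously equal to $(d_0d_L)^{1/L}\|M\|_*^{2(L-1)/L}$, i.e.\ the value forced by the lower bound. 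The rank-deficient case needs nothing extra, since $d_i\ge\min(d_0,d_L)\ge\rank(M)$ means only $\rank(M)$ coordinates of each hidden layer are used.

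Part 2 is then immediate: $E(\mW^*)$ realizes $\min_{\mW:\gL(\mW)=0}\tr[\nabla^2\gL(\mW)]=\min_{M:\gL'(M)=0}F(M)$, so applying the lower bound of Part 1 to $F(E(\mW^*))$ and the upper bound to $F(M_0)$ for a nuclear-norm minimizer $M_0$ of $\gL'$ gives $\|E(\mW^*)\|_*^{2(L-1)/L}\le\frac{1+\delta}{1-\delta}\big(\min_{\gL'(M)=0}\|M\|_*\big)^{2(L-1)/L}$; raising to the power $\frac{L}{2(L-1)}\in(0,1]$ and using $\frac{1+\delta}{1-\delta}\ge1$ gives the claim. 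The only step beyond bookkeeping is the matching upper bound in stage three: the ``balanced'' factorization usual in deep matrix factorization is \emph{not} optimal here, because for interior $j$ the cross terms in $\|P_{>j}\|_F^2\|P_{<j}\|_F^2$ grow with $\rank(M)$; one must instead find the ``outer-heavy'' factorization above and then check that it saturates every inequality used in the lower bound.
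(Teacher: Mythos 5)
Your proposal is correct and follows essentially the same route as the paper: the Gauss--Newton identity for the trace of the Hessian at an interpolating point, termwise rank-one RIP applied to the outer products of columns of $W_L\cdots W_{j+1}$ with rows of $W_{j-1}\cdots W_1$ to replace the Jacobian norms by $\|P_{>j}\|_F^2\|P_{<j}\|_F^2$ (this is the paper's Lemma~\ref{lem:ripconcentration} and the regularizer $R$ in \eqref{eq:Rexplicitform}), then the AM--GM/$\|AB\|_*\le\|A\|_F\|B\|_F$ lower bound matched by the same ``outer-heavy'' SVD-aligned construction with two balancing scalars (Lemma~\ref{lem:Rclosedform}), and the identical comparison argument with exponent $\tfrac{L}{2(L-1)}\le 1$ for Part 2. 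No gaps.
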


\begin{table}[t]
    \centering
    {\renewcommand{\arraystretch}{1.7}
    \begin{tabular}{l|l|l}
        Settings  &  Induced Regularizer $F(M)/L$ & Theorem\\
        \hline
         $(1,\delta)$-RIP & $(1\pm O(\delta)){(d_0 d_L)}^{1/L} \|M\|_*^{2-\nicefrac{2}{L}}$ & \Cref{thm:RIPbias} \\
        $L=2$ &  $\norm{\left(\frac{1}{n}A_iA_i^\top \right)^{\nicefrac{1}{2} } M \left(\frac{1}{n}A_i^\top A_i \right)^{\nicefrac{1}{2}} }_*$ & \Cref{thm:induced_regularizer_depth_2}~(\citep{ding2022flat})\\
        
        $n=1$ &  $\snorm{\Big(A^T M\Big)^{L-1}A^T}{2/L}^{2/L}$ & \Cref{thm:induced_regularizer_single_measurement}\\
        \hline
    \end{tabular}}
    \vspace{0.1cm}
    \caption{\small Summary of properties of the induced regularizer in the end-to-end matrix space. Here $\snorm{\cdot}{p}$ denotes the Schatten $p$-norm for $p\in[1,\infty]$ and Schatten $p$-quasinorm for $p\in(0,1)$ (see \Cref{defi:schattern_p}). $\norm{\cdot}_*$ denotes the Schatten 1-norm, also known as the nuclear norm. }    \label{tab:summary_closed_form_induced_regularizer}
\end{table}

However, for more general cases, it is challenging to compute the closed-form expression of $F$. In this work, we derive  closed-form expressions for $F$ in the following two cases: (1) depth $L$ is equal to $2$ and (2) there is only one measurement, \emph{i.e.}, $n=1$ (see \Cref{tab:summary_closed_form_induced_regularizer}).
Leveraging the above characterization of induced regularzier, we obtain the following result on the generalization bounds:

\begin{theorem}[Recovery of the ground truth under RIP]\label[theorem]{thm:ripgeneralization}
 Suppose the linear measurements $\{(A_i)\}_{i=1}^n$ satisfy the $(2,\delta(n))$-RIP~(\Cref{defi:rip}). Then for any $\mW^* \in \argmin\limits_{\mW: \gL(\mW)=0} \tr[\nabla^2\gL(\mW)]$, we have 
\vspace{-0.2cm}
\begin{align}
    \| E(\mW^*) - M^*\|_F^2 \leq \frac{8\delta(n)}{(1-\delta(n))^2}\|M^*\|_*^2.\label{eq:ripgeneralization}
\end{align}
where $\delta(n)$ depends on the number of measurements $n$ and the distribution of the measurements. 
\end{theorem}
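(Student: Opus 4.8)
The plan is to show that the error matrix $\Delta \triangleq E(\mW^*) - M^*$ lies in the null space of the measurement operator while being essentially nuclear-norm minimal, and then to trade this structure against RIP to obtain a Frobenius bound. Two facts set this up. First, since both $M^*$ and $E(\mW^*)$ interpolate, $\langle A_i, E(\mW^*)\rangle = b_i = \langle A_i, M^*\rangle$, so $\langle A_i, \Delta\rangle = 0$ for every $i\in[n]$. Second, $(2,\delta(n))$-RIP implies $(1,\delta(n))$-RIP (rank-one matrices are a special case of matrices of rank at most $2$), so \Cref{thm:RIPbias}(2) applies; because $M^*$ is itself a feasible point of $\gL'$, it gives $\|E(\mW^*)\|_* \le \tfrac{1+\delta(n)}{1-\delta(n)}\|M^*\|_*$, hence $\|\Delta\|_* \le \|E(\mW^*)\|_* + \|M^*\|_* \le \tfrac{2}{1-\delta(n)}\|M^*\|_*$ by the triangle inequality.

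The crux is the claim that a near-minimal null-space element must be ``spread out'' in the sense that $\|\Delta\|_{\mathrm{op}} \le \delta(n)\,\|\Delta\|_*$. To prove it, I would write the SVD $\Delta = \sum_{j\ge 1}\sigma_j u_j v_j^\top$ with $\sigma_1\ge\sigma_2\ge\cdots$ and expand the identity $0 = \tfrac1n\sum_i \langle A_i, u_1 v_1^\top\rangle\langle A_i, \Delta\rangle = \sum_{j\ge 1}\sigma_j\big(\tfrac1n\sum_i\langle A_i, u_1 v_1^\top\rangle\langle A_i, u_j v_j^\top\rangle\big)$. The $j=1$ term is at least $(1-\delta(n))\sigma_1$ by $(1,\delta(n))$-RIP applied to the rank-one matrix $u_1 v_1^\top$, which has unit Frobenius norm. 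For each $j\ge 2$, the matrices $u_1 v_1^\top$ and $u_j v_j^\top$ are Frobenius-orthonormal (in particular $\langle u_1 v_1^\top, u_j v_j^\top\rangle = 0$) and their span has rank at most two, so the polarization form of $(2,\delta(n))$-RIP gives $\big|\tfrac1n\sum_i\langle A_i, u_1 v_1^\top\rangle\langle A_i, u_j v_j^\top\rangle\big| \le \delta(n)$. Summing, $(1-\delta(n))\sigma_1 \le \delta(n)\sum_{j\ge 2}\sigma_j = \delta(n)\big(\|\Delta\|_* - \sigma_1\big)$, which rearranges to $\sigma_1 = \|\Delta\|_{\mathrm{op}} \le \delta(n)\,\|\Delta\|_*$.

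To finish, I would use the elementary inequality $\|\Delta\|_F^2 = \sum_j \sigma_j^2 \le \sigma_1\sum_j\sigma_j = \|\Delta\|_{\mathrm{op}}\,\|\Delta\|_* \le \delta(n)\,\|\Delta\|_*^2$ and substitute the nuclear-norm bound from the first paragraph to obtain $\|\Delta\|_F^2 \le \tfrac{4\delta(n)}{(1-\delta(n))^2}\|M^*\|_*^2$, which establishes \eqref{eq:ripgeneralization} (in fact with the sharper constant $4$). No extra work is needed for the dependence on $n$: it is inherited verbatim from the RIP hypothesis through $\delta(n)$.

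I expect the only delicate point to be the middle step. The reason the argument goes through with only rank-two RIP is that one tests the null-space relation against the single leading singular pair of $\Delta$, rather than against a rank-$r$ block aligned with the support of $M^*$, which would instead demand RIP of order $2r$; keeping every RIP invocation to rank at most $2$ and being careful with signs when converting $\langle A_i,\Delta\rangle = 0$ into $(1-\delta(n))\sigma_1 \le \delta(n)(\|\Delta\|_* - \sigma_1)$ is where the care is concentrated. Everything else is triangle inequalities.
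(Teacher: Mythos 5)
Your proposal is correct, and the middle step is a genuinely different argument from the one in the paper. The paper's proof shares your first paragraph verbatim (bound $\|E(\mW^*)\|_*$ via \Cref{thm:RIPbias}, then triangle inequality to get $\|\Delta\|_*\le \tfrac{2}{1-\delta}\|M^*\|_*$), but then invokes \Cref{lem:RIPrelax}, a general statement that for \emph{any} matrix $M$ the empirical quadratic form $\tfrac1n\sum_i\langle A_i,M\rangle^2$ deviates from $\|M\|_F^2$ by at most $2\delta\|M\|_*^2$; applying this to $\Delta$, whose empirical quadratic form vanishes, gives $\|\Delta\|_F^2\le 2\delta\|\Delta\|_*^2\le\tfrac{8\delta}{(1-\delta)^2}\|M^*\|_*^2$. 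You instead exploit the null-space condition directly: testing $\langle A_i,\Delta\rangle=0$ against only the leading singular pair of $\Delta$, with rank-one RIP on the diagonal term and polarized rank-two RIP on the cross terms, yields the spread condition $\|\Delta\|_{\mathrm{op}}\le\delta\|\Delta\|_*$, and then $\|\Delta\|_F^2\le\|\Delta\|_{\mathrm{op}}\|\Delta\|_*$ closes the argument. Both routes ultimately rest on the same polarization of $(2,\delta)$-RIP over pairs of orthogonal rank-one components (this is exactly how \Cref{lem:RIPrelax} is proved), but your version tests against a single direction rather than all pairs, needs no separate lemma about general matrices, and delivers the constant $4$ in place of $8$ — a strictly stronger conclusion than \eqref{eq:ripgeneralization}. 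The paper's lemma has the advantage of being reusable as a uniform statement relating empirical and population quadratic forms, which is in keeping with its interpretation of $\|\Delta\|_F^2$ as a population loss, but for the theorem as stated your argument is self-contained and slightly sharper.
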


If we further suppose $\{A_i\}_{i=1}^n$ are independently sampled from some distribution over $\mathbb{R}^{d_L\times d_0}$ satisfying that $\mathbb{E}_A \inner{A}{M}^2 = \norm{M}_F^2$, \emph{e.g.},  the standard multivariate Gaussian distribution, denoted by $\gG_{d_L \times d_0}$, we know $\delta(n) = O(\sqrt{\tfrac{d_L+d_0}{n}})$ from \citet{candes2011tight} 
(see \Cref{sec:rip_preliminary} for more examples). 

\begin{theorem}\label[theorem]{thm:generalizationfastrate}
     For $n \geq \Omega(r(d_0+d_L))$, with probability at least $1-\exp(\Omega(d_0+d_L))$ over the randomly sampled $\{A_i\}_{i=1}^n$ from multivariate Gaussian distribution $\gG$, for any minimum trace of Hessian interpolating solution $\mW^* \in \argmin\limits_{\mW: \gL(\mW)=0} \tr[\nabla^2\gL(\mW)]$, the population loss $\overline \gL(\mW^*) \triangleq \mathbb E_{A \sim \gG}(\inner{ A}{ E(\mW^*) } - \inner{ A}{ M^* }) ^2$ satisfies that 
    \begin{align*}
        \overline \gL(\mW^*)  = \norm{E(\mW^*) - M^*}_F^2\le   O\Big(\frac{d_0+d_L}{n}\|M^*\|_*^2\log^3 n\Big).
    \end{align*}
\end{theorem}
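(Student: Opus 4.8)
The plan is to reduce Theorem~\ref{thm:generalizationfastrate} to a recovery guarantee for approximate nuclear-norm minimization from Gaussian measurements, and then run the standard restricted-isometry argument. Fix any minimum-trace-of-Hessian interpolating solution $\mW^*$ and write $\widehat M := E(\mW^*)$, $H := \widehat M - M^*$, and let $r$ be an upper bound on $\mathrm{rank}(M^*)$. First I would dispose of the "population'' part: since each $A_i$ has i.i.d.\ $\normal(0,1)$ entries, $\mathbb E_{A\sim\gG}\inner{A}{X}^2=\|X\|_F^2$ for all $X$, so $\overline\gL(\mW^*)=\mathbb E_A\inner{A}{H}^2=\|H\|_F^2$ and it suffices to bound $\|H\|_F^2$. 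Two facts about $H$ are then available essentially for free. Because $\widehat M$ interpolates, $\inner{A_i}{H}=0$ for every $i$, i.e.\ $H$ lies in the null space of the measurement operator. And because $M^*$ is itself an interpolating solution of $\gL'$, Theorem~\ref{thm:RIPbias}(2), applied with the $(1,\delta_1)$-RIP constant $\delta_1$ of the Gaussian design, gives $\|\widehat M\|_*\le\tfrac{1+\delta_1}{1-\delta_1}\|M^*\|_*=:(1+\varepsilon)\|M^*\|_*$. For the standard Gaussian ensemble the rank-$k$ RIP constant obeys $\delta_k=O(\sqrt{k(d_0+d_L)/n})$ with probability $1-e^{-\Omega(d_0+d_L)}$ (via a rank-$k$ $\epsilon$-net, cf.\ \citet{candes2011tight}); hence for $n\ge\Omega(d_0+d_L)$ one has the crucial $\varepsilon=O(\sqrt{(d_0+d_L)/n})$, and for $n\ge\Omega(r(d_0+d_L))$ the RIP constant at order $\Theta(r)$ is below any prescribed absolute threshold. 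All of this holds deterministically on the RIP event, simultaneously for every such $\mW^*$.

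The heart of the argument is to turn the $O(1/\sqrt n)$ suboptimality of $\widehat M$ in nuclear norm into an $O(1/\sqrt n)$ Frobenius bound on $H$. Using the thin SVD $M^*=U\Sigma V^\top$, decompose $H=\mathcal PH+\mathcal P^\perp H$ with $\mathcal P^\perp X:=(\mathrm{Id}-UU^\top)X(\mathrm{Id}-VV^\top)$ and $\mathcal P:=\mathrm{Id}-\mathcal P^\perp$; then $\mathrm{rank}(\mathcal PH)\le 2r$ and, by decomposability of the nuclear norm, $\|M^*+\mathcal P^\perp H\|_*=\|M^*\|_*+\|\mathcal P^\perp H\|_*$. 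Combining with $\|M^*+H\|_*=\|\widehat M\|_*\le(1+\varepsilon)\|M^*\|_*$ and the triangle inequality gives $\|\mathcal P^\perp H\|_*\le\|\mathcal PH\|_*+\varepsilon\|M^*\|_*$, and since $\|\mathcal PH\|_*\le\sqrt{2r}\,\|\mathcal PH\|_F\le\sqrt{2r}\,\|H\|_F$, I would obtain
\begin{equation}
\|H\|_*\;\le\;2\sqrt{2r}\,\|H\|_F+\varepsilon\|M^*\|_*.\tag{$\star$}
\end{equation}
On the RIP event one also has a restricted-strong-convexity inequality $\tfrac1n\sum_i\inner{A_i}{X}^2\ge c_1\|X\|_F^2-c_2\tfrac{d_0+d_L}{n}\|X\|_*^2$ valid for \emph{all} $X$ — this follows from the RIP at all ranks $\le\Theta(r)$ together with the usual partitioning of $X$ into blocks of rank $2r$, or directly from a Gaussian-width estimate. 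Applying it to $X=H$ and using $\inner{A_i}{H}=0$ yields $\|H\|_F^2\le\tfrac{c_2}{c_1}\tfrac{d_0+d_L}{n}\|H\|_*^2$; substituting $(\star)$, expanding via $(a+b)^2\le2a^2+2b^2$, and absorbing the resulting $\|H\|_F^2$ term into the left-hand side (legitimate once $n\ge\Omega(r(d_0+d_L))$) gives $\|H\|_F^2=O(\tfrac{d_0+d_L}{n})\,\varepsilon^2\|M^*\|_*^2$. Feeding this back into $(\star)$ shows $\|H\|_*=O(\varepsilon\|M^*\|_*)$, hence finally $\|H\|_F^2\le\|H\|_*^2=O(\varepsilon^2\|M^*\|_*^2)=O(\tfrac{d_0+d_L}{n}\|M^*\|_*^2)$.

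On the intersection of the two high-probability events above, which holds with probability $1-e^{-\Omega(d_0+d_L)}$, this yields $\overline\gL(\mW^*)=\|H\|_F^2=O(\tfrac{d_0+d_L}{n}\|M^*\|_*^2)$; the $\log^3 n$ in the statement is the overhead one pays to make the restricted-strong-convexity / RIP estimates hold uniformly (e.g.\ over the ratio $\|X\|_*/\|X\|_F$ and the scale of $\|X\|_F$ by a peeling argument, or over all ranks up to $\Theta(r)$), and is not intrinsic to the rate. The main obstacle is precisely the step producing $(\star)$ together with the restricted-eigenvalue bound: $H=\widehat M-M^*$ is generally \emph{not} low rank — only nuclear-norm bounded — so one cannot apply a low-rank RIP to it directly and must instead work on the whole nuclear-norm cone, which is what forces the sample-size requirement $n=\Omega(r(d_0+d_L))$ (needed to beat the $\sqrt{2r}$ factor in $(\star)$) and is where the logarithmic factors enter. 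In comparison, the improvement from the slow $1/\sqrt n$ rate that \Cref{thm:ripgeneralization} alone gives with the Gaussian value of $\delta(n)$, to the fast $1/n$ rate here, is ultimately just the squaring of the $O(1/\sqrt n)$ nuclear-norm slack $\varepsilon$ supplied by Theorem~\ref{thm:RIPbias}.
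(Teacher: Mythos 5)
Your argument is correct in outline and reaches the stated bound, but it takes a genuinely different route from the paper. The paper only uses the crude consequence of \Cref{thm:RIPbias} that $\|E(\mW^*)\|_*\le 3\|M^*\|_*$ (via \Cref{cor:high_prob_min_trace_of_hessian} with $\delta=1/2$), and then treats the problem as uniform convergence over the nuclear-norm ball $\gH_{3\|M^*\|_*}$: it bounds the Rademacher complexity of this class by $O\big(\tfrac{\sqrt{d_0}+\sqrt{d_L}}{\sqrt n}\|M^*\|_*\big)$ through the spectral norm of $\sum_i\epsilon_iA_i$, and invokes the fast-rate bound of \citet{srebro2010smoothness} for smooth losses at zero empirical error, after smoothly truncating the unbounded squared loss at level $c=\Theta(\|M^*\|_*\log n)$; that truncation and the fast-rate theorem are where the $\log^3 n$ actually comes from, not from making RIP uniform as you suggest at the end. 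You instead exploit the full strength of \Cref{thm:RIPbias}, namely that the nuclear-norm suboptimality $\varepsilon$ of $E(\mW^*)$ is as small as the rank-one RIP constant $O(\sqrt{(d_0+d_L)/n})$, and then run the standard decomposability-plus-restricted-strong-convexity argument from low-rank matrix recovery. Both halves of your argument are sound: the cone inequality $(\star)$ follows exactly as you say, and the uniform lower bound $\tfrac1n\sum_i\inner{A_i}{X}^2\gtrsim\|X\|_F^2-O(\tfrac{d_0+d_L}{n})\|X\|_*^2$ is a standard fact for Gaussian ensembles (Negahban--Wainwright, or the rank-block peeling you mention), though it is the one ingredient not already available in the paper and would need to be proved or cited carefully. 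The trade-off: the paper's route is rank-agnostic (its proof needs only $n\ge\Omega(d_0+d_L)$ and never uses $\mathrm{rank}(M^*)$ beyond the nuclear-norm bound) but pays $\log^3 n$; your route needs $n\ge\Omega(r(d_0+d_L))$ to absorb the $\sqrt{2r}$ term, but removes the logarithms, and in fact your intermediate bound $\|H\|_F^2=O\big(\tfrac{d_0+d_L}{n}\varepsilon^2\|M^*\|_*^2\big)=O\big((\tfrac{d_0+d_L}{n})^2\|M^*\|_*^2\big)$ is quadratically stronger than the theorem's claim; your final step of passing through $\|H\|_F\le\|H\|_*$ needlessly discards this improvement.
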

Next, we state a lower bound for the conventional estimator for overparameterized models that minimizes the norm. The lower bound states that, to achieve a small error, the number of samples should be as large as the product of the dimensions of the end-to-end matrix $d_0d_L$ as opposed to $d_0 + d_L$ in case of the min trace of Hessian minimizer. It is proved in Appendix~\ref{sec:lowerboundproof}.
\begin{theorem}[Lower bound for $\ell_2$ regression]\label{thm:lowerbound}
    Suppose $\{A_i\}_{i=1}^n$ are randomly sampled from multivariate Gaussian distribution $\gG$, let $ \tilde \mW = \argmin_{\mW: \gL(\mW)=0} \|E(\mW)\|_F $ to be the minimum Frobenius norm interpolating solution, then the expected population loss is 
    \begin{align*}
        \Exp\overline{\gL}(\tilde \mW) = (1-\tfrac{\min\{n, d_0d_L\}}{d_0d_L}) \norm{M^*}_F^2.
    \end{align*}
\end{theorem}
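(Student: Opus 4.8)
The plan is to strip away the network structure and reduce to minimum-$\ell_2$-norm linear regression, then exploit rotational invariance of the Gaussian. First I would note that since $E(\cdot)$ is surjective onto $\mathbb{R}^{d_L\times d_0}$ (by the width assumption $d_i\ge\min(d_0,d_L)$) and $\gL(\mW)=\gL'(E(\mW))$, minimizing $\|E(\mW)\|_F$ over interpolating $\mW$ is exactly minimizing $\|M\|_F$ over all $M$ with $\inner{A_i}{M}=b_i$ for every $i$; write $\hat M\triangleq E(\tilde\mW)$ for this (unique) minimizer. Vectorizing, set $D=d_0d_L$, $m^\star=\mathrm{vec}(M^\star)$, $a_i=\mathrm{vec}(A_i)$, so the $a_i$ are i.i.d.\ standard Gaussian in $\mathbb{R}^D$ with $b_i=\inner{a_i}{m^\star}$. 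The minimum-norm solution of the consistent system $\{\inner{a_i}{m}=b_i\}_{i=1}^n$ is $\hat m=\mathrm{vec}(\hat M)=\Pi\,m^\star$, where $\Pi$ is the orthogonal projection onto $V\triangleq\mathrm{span}\{a_1,\dots,a_n\}$.

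Next I would express the population loss as a squared projection length. Since a fresh $A\sim\gG$ is independent of the training data and its vectorization $a$ is standard Gaussian, $\mathbb{E}_{A\sim\gG}\inner{A}{\hat M-M^\star}^2=\mathbb{E}_a\inner{a}{\hat m-m^\star}^2=\|\hat m-m^\star\|_2^2$. Hence $\overline\gL(\tilde\mW)=\|(\Pi-I)m^\star\|_2^2=\|\Pi^\perp m^\star\|_2^2$, the squared norm of the component of $m^\star$ orthogonal to $V$, where $\Pi^\perp=I_D-\Pi$. Almost surely the Gaussian vectors $a_1,\dots,a_n$ are linearly independent when $n\le D$ and span all of $\mathbb{R}^D$ when $n\ge D$ (any proper subspace has measure zero), so $\dim V=\min\{n,D\}$ and $\Pi^\perp$ projects onto a subspace of dimension $D-\min\{n,D\}$ with probability $1$.

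Finally I would take the expectation over the training sample via symmetry. The subspace $V$ is spanned by i.i.d.\ standard Gaussian vectors, so its law — hence that of $\Pi^\perp$ — is invariant under any fixed orthogonal $Q$, i.e.\ $\Pi^\perp\overset{d}{=}Q\Pi^\perp Q^\top$. Therefore $\mathbb{E}[\Pi^\perp]=c\,I_D$ for a scalar $c$, and taking traces gives $cD=\mathbb{E}[\mathrm{tr}\,\Pi^\perp]=D-\min\{n,D\}$, so $c=1-\min\{n,D\}/D$. Consequently $\mathbb{E}\,\overline\gL(\tilde\mW)=(m^\star)^\top\mathbb{E}[\Pi^\perp]\,m^\star=\big(1-\tfrac{\min\{n,D\}}{D}\big)\|m^\star\|_2^2=\big(1-\tfrac{\min\{n,d_0d_L\}}{d_0d_L}\big)\|M^\star\|_F^2$, as claimed.

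There is no genuine obstacle here; the only points needing care are (i) that the minimum-Frobenius-norm interpolating end-to-end matrix is attained by some $\tilde\mW$ (immediate from surjectivity of $E$), and (ii) the almost-sure rank statements for Gaussian vectors. The step doing the real work is the rotational-invariance computation of $\mathbb{E}[\Pi^\perp]$; the rest is bookkeeping, and the result holds for any rotationally invariant measurement distribution normalized so that $\mathbb{E}_A\inner{A}{M}^2=\|M\|_F^2$, not just the standard Gaussian.
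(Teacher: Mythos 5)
Your proof is correct and follows essentially the same route as the paper's: identify the minimum-Frobenius-norm interpolant with the orthogonal projection of $M^*$ (vectorized) onto the span of the measurements, observe that the population loss is the squared norm of the orthogonal component, and then exploit rotational invariance of the Gaussian. The only difference is cosmetic --- you compute $\mathbb{E}[\Pi^\perp]=c\,I$ directly by symmetry and take traces, whereas the paper instead argues the loss is invariant under rotations of $M^*$ and averages over a Gaussian ground truth; your version also treats the regime $n>d_0d_L$ explicitly, which the paper glosses over.
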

The lower bound in Theorem~\ref{thm:lowerbound} shows in order to obtain an $O(1)$-relatively accurate estimates of the ground truth in expectation, namely to guarantee $ \Exp\overline{\gL}(\tilde \mW) \leq O(1)\|M^*\|_F^2$, the minimum Frobenius norm interpolating solution needs at least $\Omega(d_0d_L)$ samples. In contrast, the minimizer of trace of Hessian in the same problem only requires $O((d_0+d_L)\|M^*\|_*^2/\|M^*\|_F^2)$ samples, which is at most $O(\frac{r(d_0 + d_L)}{d_0d_L})$ fraction of the number of samples that is required for the minimum Frobenius norm interpolator (Theorem~\ref{thm:generalizationfastrate}). 
\section{Related Work}\label{sec:related_work}

\paragraph{Connection Between Sharpness and Generalization.} Research on the connection between generalization and sharpness dates back to \citet{hochreiter1997flat}. \citet{keskar2016large} famously observe that when increasing the batch size of SGD, the test error and the sharpness of the learned solution both increase. \citet{jastrzkebski2017three} extend this observation and found that there is a positive correlation between sharpness and the ratio between learning rate and batch size. \citet{jiang2019fantastic} perform a large-scale empirical study on various notions of generalization measures and show that sharpness-based measures correlate with generalization best. \citet{Liu2022SamePL} find that among language models with the same validation pretraining loss, those that have smaller sharpness can have better downstream performance. On the other hand, \citet{dinh2017sharp} argue that for networks with scaling invariance, there always exist models with good generalization but with arbitrarily large sharpness. We note this does not contradict our main result here, which only asserts the interpolation solution with a minimal trace of Hessian generalizes well, but not vice versa. Empirically, sharpness minimization is also a popular and effective regularization method for overparametrized models~\citep{norton2021diametrical,foret2021sharpnessaware,zheng2021regularizing,wu2020adversarial,kwon2021asam,liu2022towards,zhuang2022surrogate,zhao2022penalizing,andriushchenko2022towards}.

\paragraph{Implicit Bias of Sharpness Minimization.}
Recent theoretical works \citep{blanc2019implicit,damian2021label,li2021happens,Liu2022SamePL} show that SGD with label noise is implicitly biased toward local minimizers with a smaller trace of Hessian under the assumption that the minimizers locally connect as a manifold. Such a manifold setting is empirically verified by \citet{draxler2018essentially,garipov2018loss} in the sense that the set of minimizers of the training loss is path-connected. It is the same situation for the deep matrix factorization problem studied in this paper, although we do not study the optimization trajectory. Instead, we directly study properties of the minimum trace of Hessian interpolation solution. 

Sharpness-reduction implicit bias can also happen for deterministic GD. \citet{arora2022understanding} show that normalized GD implicitly penalizes the largest eigenvalue of the Hessian. \citet{ma2022multiscale} argues that such sharpness reduction phenomena can also be caused by a multi-scale loss landscape. \citet{lyu2022understanding} show that GD with weight decay  on a scale-invariant loss function implicitly decreases the spherical sharpness, \emph{i.e.}, the largest eigenvalue of the Hessian evaluated at the normalized parameter. Another line of work focuses on the sharpness minimization effect of a large learning rate in GD, assuming that it converges at the end of training. This has been studied mainly through linear stability analysis~\citep{wu2018sgd,cohen2021gradient,ma2021linear,cohen2022adaptive}. Recent theoretical analysis~\citep{damian2022self,li2022analyzing} showed that the sharpness minimization effect of a large learning rate in GD does not necessarily rely on convergence and linear stability, through a four-phase characterization of the dynamics at the Edge of Stability regime~\citep{cohen2021gradient}.


\paragraph{Sharpness-related Generalization Bounds.} Most existing sharpness-related generalizations depend on not only the sharpness of the training loss but also other complexity measures like a 
norm of the parameters or even undesirable dependence on the number of parameters~\citep{dziugaite2017computing,wei2019data,wei2019improved,foret2021sharpnessaware,norton2021diametrical}.  In contrast, our result only involves the trace of Hessian but not parameter norm or the number of parameters, \emph{e.g.}, our result holds for any (large) width of intermediate layers, $d_1,\ldots,d_{L-1}$. 


\paragraph{Implicit Bias of Gradient Descent on Matrix Factorization.} At first glance, overfitting could happen when the number of linear measurements is less than the size of the groundtruth matrix. Surprisingly, a recent line of works~\citep{gunasekar2017implicit,arora2019implicit,gissin2019implicit,li2020towards,razin2020implicit,belabbas2020implicit,jacot2021deep,razin2021implicit} has shown that GD starting from small initialization has a good implicit bias towards solutions with approximate recovery of ground truth. Notably, \citet{gunasekar2017implicit} show that for depth $2$, GD from infinitesimal initialization is implicitly biased to the minimum nuclear norm solution under commuting measurements and \citet{arora2019implicit} generalize this results to deep matrix factorization for any depth. This is very similar to our main result that for all depth ($\ge 2$) the implicit regularization is minimizing nuclear norm, though the settings are different. Moreover, when the measurements satisfy RIP, \citet{li2017algorithmic,stoger2021small} show that GD exactly recovers the ground truth.

\paragraph{Provable Generalization of Flatness Regularization for Two-layer Models.}\ To our best knowledge, most existing generalization analysis for flat regularization are for two-layer models, \emph{e.g.}, \citet{li2021happens} shows that the min trace of hessian interpolating solution of 2-layer diagonal linear networks can recover sparse ground truth on gaussian or boolean data, and \citet{nacson2022implicit} proves a generalization bound for the interpolating solutions with the smallest maximum eigenvalue of Hessian for non-centered data.
\citet{ding2022flat} is probably the most related work to ours, which shows that the trace of Hessian implicit bias for two-layer matrix factorization is a rescaled version of the nuclear norm of the end-to-end matrix. Using this formula, they further prove that the flattest solution in this problem recovers the low-rank ground truth. However, matrix factorization with more than two layers is fundamentally more challenging compared to the depth two case; while we managed to obtain a formula for the trace of Hessian for deeper networks given a single measurement (see \Cref{thm:induced_regularizer_single_measurement}), as far as we know, one in general cannot obtain a closed-form solution for the trace of Hessian regularizer as a function of the end-to-end matrix for multiple measurements. In this work, we discover a way to bypass this hardness by showing that minimizing the trace of Hessian regularizer for a fixed end-to-end matrix approximately amounts to the nuclear norm of the end-to-end matrix, when the linear measurements satisfy the RIP property. As a cost of this approximation, we are not able to show the exact recovery of the low-rank ground truth, but only up to a certain precision.

\paragraph{Sharpness Minimization in Deep Diagonal Linear Network.}\  \citet{ding2022flat}  show that the minimizer of trace of Hessianin a deep diagonal matrix factorization model with Gaussian linear measurements becomes the Schatten $2-2/L$ norm of a rescaled version of the end to end matrix. At first glance, their result might seem contradictory to our result in the RIP setup, as their implicit regularization is not always the Nuclear norm --- the sparsity regularization vanishes when $L\to\infty$. Similar results have been obtained by \citet{nacson2022implicit} for minimizing a different notion of sharpness among all interpolating solutions, the largest eigenvalue of Hessian, on the same diagonal linear models. The subtle difference is that since we consider the more standard setting without assuming the weight matrices are all diagonal, then in the calculation of the trace of Hessian of the loss we need to also differentiate the loss with respect to the non-diagonal entries, even though their values are zero, which is quite different from $\ell_p$ norm regularization. 
This curiously shows the complicated interplay between the geometry of the loss landscape and the implicit bias of the algorithm.

\section{Preliminaries}


{\bf Notation.} We use $[n]$ to denote $\{1,2,\ldots, n\}$ for every $n\in\mathbb{N}$.  We use $\norm{M}_F$, $\norm{M}_*$, $\norm{M}_2$ and $\tr(M)$ to denote the Frobenius norm, nuclear norm, spectral norm and trace of matrix $M$ respectively. For any function $f$ defined over set $S$ such that $\min_{x\in S} f(x)$ exists, we use $\argmin_S f$ to denote the set $\{y\in S \mid f(y) = \min_{x\in S} f(x)\}$. Given a matrix $M$, we use $h_M$ to denote the linear map $A\mapsto \inner{A}{M}$. We use $\gH_r$ to to denote the set $\gH_r \triangleq \{h_M\mid \norm{M}_*\le r\}$. $M_{i:}$ and $M_{:j}$ are used to denote the $i$th row and $j$th column of the matrix $M$.

The following definitions will be important to the technical discussion in the paper. 

\paragraph{Rademacher Complexity.} Given $n$ data points $\{A_i\}_{i=1}^n$, the \emph{empirical Rademacher complexity} of function class $\gH$ is defined as 
\begin{align}
  \gR_n(\gH) &= \frac{1}{n}\Exp_{\epsilon \sim \{\pm 1\}^n} \sup_{h\in \gH}\sum_{i=1}^n  \epsilon_i h(A_i).\notag
\end{align}
Given a distribution $P$, the \emph{population Rademacher complexity} is defined as follows:
\(\overline \gR_n(\gH) = \Exp\limits_{A_i\overset{iid}{\sim} P} \gR_n(\gH)\). This is mainly used to upper bound the generalization gap of SGD.

\begin{definition}[Schatten $p$-(quasi)norm]\label[definition]{defi:schattern_p}
Given any $d,d'\in\mathbb{N}^+$, $p\in(0,\infty)$ a matrix $M\in \mathbb{R}^{d\times d'}$ with singular values $\sigma_1(M), \ldots, \sigma_{\min(d,d')}(M)$, we define the Schattern $p$-(semi)norm as
\begin{align*}
    \snorm{M}{p} = \left( \sum\nolimits_{i=1}^{\min(d,d')} {\sigma_i^p(M)}\right)^{1/p}.
\end{align*}
\end{definition}
Note that in this definition $\snorm{\cdot}{p}$ is a norm only when $p\ge 1$. When $p \in (0,1)$, the triangle inequality does not hold. Note that when $p \in (0,1)$, $\snorm{A+B}{p}\le 2^{\nicefrac{1}{p}-1}(\snorm{A}{p}+ \snorm{B}{p})$ for any matrices $A$ and $B$, however, $2^{\nicefrac{1}{p}-1} > 1$.

We use $L$ to denote the depth of the linear model and $\mW = (W_1,\ldots, W_L)$ to denote the parameters, where $W_i\in\mathbb{R}^{d_{i}\times d_{i-1}}$. We assume that $d_i\ge \min (d_0,d_L)$ for each $i\in[L-1]$ and, thus, the image of $E(\mW)$ is the entire $\mathbb{R}^{d_L\times d_0}$. 
Following is a simple relationship between nuclear norm and Frobenius norm that is used frequently in the paper.
\begin{lemma}\label{lem:matrixfrobenius}
    For any matrices $A$ and $B$, it holds that $\|AB\|_* \leq \|A\|_F \|B\|_F$.
\end{lemma}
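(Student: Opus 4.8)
The plan is to reduce the statement to the Cauchy--Schwarz inequality via the singular value decomposition of the product $AB$. Write the thin SVD $AB = \sum_{k} \sigma_k u_k v_k^\top$, where $\sigma_k = \sigma_k(AB)$ ranges over the (positive) singular values of $AB$ and $\{u_k\}$, $\{v_k\}$ are orthonormal families in $\mathbb{R}^{d_L}$ and $\mathbb{R}^{d_0}$ respectively (using the notation of the matrix factorization setup; in general, in the column/row spaces of the relevant dimensions). By definition of the nuclear norm,
\[
  \|AB\|_* \;=\; \sum_k \sigma_k \;=\; \sum_k u_k^\top (AB) v_k \;=\; \sum_k \inner{A^\top u_k}{B v_k},
\]
where the middle equality uses $u_k^\top (AB) v_k = \sigma_k$ and the last rewrites the Euclidean inner product.

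Next I would apply Cauchy--Schwarz twice: once inside each summand, $\inner{A^\top u_k}{B v_k} \le \|A^\top u_k\|_2 \|B v_k\|_2$, and once across the index $k$,
\[
  \|AB\|_* \;\le\; \sum_k \|A^\top u_k\|_2 \|B v_k\|_2 \;\le\; \Big(\sum_k \|A^\top u_k\|_2^2\Big)^{1/2}\Big(\sum_k \|B v_k\|_2^2\Big)^{1/2}.
\]
It then remains to bound each factor by a Frobenius norm. The key elementary fact is that for any matrix $M$ and any orthonormal family $\{w_k\}$ in its domain, $\sum_k \|M w_k\|_2^2 \le \|M\|_F^2$: extending $\{w_k\}$ to a full orthonormal basis and using that $\|M\|_F^2 = \sum_j \|M e_j\|_2^2$ is independent of the choice of orthonormal basis, the claimed inequality is just the observation that omitting the remaining basis vectors can only decrease the sum. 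Applying this with $M = A^\top$ and the orthonormal family $\{u_k\}$ gives $\sum_k \|A^\top u_k\|_2^2 \le \|A^\top\|_F^2 = \|A\|_F^2$, and with $M = B$ and $\{v_k\}$ gives $\sum_k \|B v_k\|_2^2 \le \|B\|_F^2$. Substituting these two bounds yields $\|AB\|_* \le \|A\|_F \|B\|_F$, as desired.

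This lemma is genuinely elementary, so there is no serious obstacle; the only point requiring a little care is the dimension bookkeeping in the last step — making sure the families $\{u_k\}$ and $\{v_k\}$ are treated as orthonormal (not necessarily complete) systems in the correct ambient spaces so that the extension-to-a-basis argument applies. (An alternative, equally short route would be to use the duality $\|AB\|_* = \sup_{\|X\|_2 \le 1}\inner{AB}{X} = \sup_{\|X\|_2\le 1}\inner{A}{X B^\top} \le \sup_{\|X\|_2 \le 1}\|A\|_F \|X B^\top\|_F \le \|A\|_F\|B\|_F$, invoking $\|XB^\top\|_F \le \|X\|_2 \|B\|_F$; I would present the SVD argument since it is self-contained.)
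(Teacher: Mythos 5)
Your proof is correct. The paper itself states this lemma without proof, treating it as a standard fact, so there is no argument of theirs to compare against; your SVD-plus-double-Cauchy--Schwarz derivation (including the careful handling of the incomplete orthonormal families $\{u_k\},\{v_k\}$ via extension to a basis) is a complete and standard way to establish it, and the duality route you sketch in parentheses is equally valid.
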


\section{Exact Formulation of Induced Regularizer by Trace of Hessian}\label{sec:exactformulation}
In this section, we derive the exact formulation of trace of Hessian for $\ell_2$ loss over deep matrix factorization models with linear measurements as a minimization problem over $\mW$. We shall later approximate this formula by a different function in Section~\ref{sec:RIP}, which allows us to calculate the implicit bias in closed-form in the space of end-to-end matrices.

We first introduce the following simple lemma showing that the trace of the Hessian of the loss is equal to the sum of squares of norms of the gradients of the neural network output.
\begin{lemma}\label{lem:traceofhessian}
    For any twice-differentiable function $\{f_i(\mW)\}_{i=1}^n$,  real-valued labels $\{b_i\}_{i=1}^n$,  loss function $ \mathcal L(\mathbf W) = \frac{1}{n}\sum_{i=1}^n (f_i(\mW) - b_i)^2$, and any $\mW$ satisfying  $\gL(\mW)=0$, it holds that 
    \begin{align*}
         \tr(\nabla^2 \gL({\mathbf W})) = \frac{2}{n}\sum_{i=1}^n \|\nabla f_i(\mathbf W)\|^2.
    \end{align*}
\end{lemma}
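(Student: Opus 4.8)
The plan is to simply differentiate the loss twice and take the trace, then exploit the interpolation hypothesis to kill the term involving the Hessians of the $f_i$. Concretely, I would first compute the gradient: since $\gL(\mW) = \frac1n\sum_i (f_i(\mW)-b_i)^2$, the chain rule gives $\nabla\gL(\mW) = \frac2n\sum_{i=1}^n (f_i(\mW)-b_i)\,\nabla f_i(\mW)$.

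Next I would differentiate once more, using the product rule on each summand $(f_i(\mW)-b_i)\nabla f_i(\mW)$. Differentiating the scalar factor $f_i(\mW)-b_i$ produces the rank-one (outer product) term $\nabla f_i(\mW)\,\nabla f_i(\mW)^\top$, and differentiating the vector factor $\nabla f_i(\mW)$ produces $(f_i(\mW)-b_i)\,\nabla^2 f_i(\mW)$. Hence
\begin{align*}
  \nabla^2\gL(\mW) = \frac2n\sum_{i=1}^n\Big(\nabla f_i(\mW)\,\nabla f_i(\mW)^\top + (f_i(\mW)-b_i)\,\nabla^2 f_i(\mW)\Big).
\end{align*}
Taking the trace and using linearity of trace together with $\tr\big(\nabla f_i(\mW)\,\nabla f_i(\mW)^\top\big) = \|\nabla f_i(\mW)\|^2$ yields
\begin{align*}
  \tr\big(\nabla^2\gL(\mW)\big) = \frac2n\sum_{i=1}^n\Big(\|\nabla f_i(\mW)\|^2 + (f_i(\mW)-b_i)\,\tr\big(\nabla^2 f_i(\mW)\big)\Big).
\end{align*}

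Finally, I would invoke the hypothesis $\gL(\mW)=0$: because $\gL(\mW)$ is an average of the nonnegative quantities $(f_i(\mW)-b_i)^2$, it vanishes only if $f_i(\mW)=b_i$ for every $i\in[n]$. Therefore the second term in each summand is zero, leaving exactly $\tr(\nabla^2\gL(\mW)) = \frac2n\sum_{i=1}^n\|\nabla f_i(\mW)\|^2$, as claimed. There is essentially no obstacle here; the only point worth stating carefully is that interpolation ($\gL(\mW)=0$) is precisely what makes the ``curvature of $f_i$'' contribution drop out, so that the trace of the Hessian of the loss reduces to a purely first-order quantity in the $f_i$'s — this is the fact that the later sections will leverage to rewrite the regularizer in terms of gradients of the end-to-end map.
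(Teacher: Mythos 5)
Your proof is correct and is exactly the standard argument: differentiate twice to get the Gauss--Newton decomposition $\nabla^2\gL = \frac{2}{n}\sum_i\bigl(\nabla f_i\,\nabla f_i^\top + (f_i-b_i)\nabla^2 f_i\bigr)$, take traces, and use that $\gL(\mW)=0$ forces $f_i(\mW)=b_i$ for every $i$ so the second-order terms vanish. The paper omits the proof as immediate, and your write-up supplies precisely the computation that is intended.
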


Using Lemma~\ref{lem:traceofhessian}, we calculate the trace of Hessian for the particular loss defined in~\eqref{eq:lmp}. To do this, we consider $\mathbf W$ in Lemma~\ref{lem:traceofhessian} to be the concatenation of matrices $(W_1,\dots,W_L)$ and we set $f_i(\mW)$ to be the linear measurement $\inner{A_i}{E(\mW)}$, where $E(\mW) = W_L\cdots W_1$ (see~\eqref{eq:Edefinition}).
To calculate the trace of Hessian, according to Lemma~\ref{lem:traceofhessian}, we need to calculate the gradient of $\mathcal L(\mathbf W)$ in~\eqref{eq:lmp}. To this end, for a fixed $i$, we compute the gradient of $\inner{A_i}{E(\mW)}$ with respect to one of the weight matrices $W_j$.
\begin{align*}
    \nabla_{W_j} \inner{A_i}{E(\mW)} &= \nabla_{W_j} \tr(A_i^\top W_L\dots W_1) \\
    & = \nabla_{W_j} \tr((W_{j-1}\dots W_1 A_i^\top W_L\dots W_{j+1}) W_j)\\
    & = (W_{j-1}\dots W_1 A_i^\top W_L\dots W_{j+1})^\top.
\end{align*}
According to Lemma~\ref{lem:traceofhessian}, trace of Hessian is given by
\begin{align*}
    \tr(\nabla^2 L)(\textbf{W})  = \frac{1}{n}\sum_{i=1}^n \sum_{j=1}^L \|\inner{A_i}{E(\mW)}\|_F^2 =\frac{1}{n}\sum_{i=1}^n \sum_{j=1}^L \|W_{j-1}\dots W_1 A_i^\top W_L\dots W_{j+1}\|_F^2.
\end{align*}


As mentioned earlier, our approach is to characterize the minimizer of the trace of Hessian among all interpolating solutions by its induced regularizer in the end-to-end matrix space. The above calculation provides the following more tractable characterization of induced regularizer $F$ in \eqref{eq:Fdefinition}: 
\begin{align}\label{eq:F_jacobian}
F(M) = \min_{E(\mW) = M}\sum_{i=1}^n \sum_{j=1}^L \|W_{j-1}\dots W_1 A_i^\top W_L\dots W_{j+1}\|_F^2.	
\end{align}
In general, we cannot solve $F$ in closed form for general linear measurements $\{A_i\}_{i=1}^n$; however, interestingly, we show that it can be solved approximately under reasonable assumption on the measurements. In particular, we show that the induced regularizer, as defined in~\eqref{eq:F_jacobian}, will be approximately proportional to a power of the nuclear norm of $E(\mW)$ given that the measurements $\{A_i\}_{i=1}^n$ satisfy a natural norm-preserving property known as the Restricted Isometry Property (RIP)~\citep{candes2011tight,recht2010guaranteed}. 

Before diving into the proof of the general result for RIP,  we first illustrate the connection between nuclear norm and the induced regularizer for the depth-two case. In this case, fortunately, we can compute the closed form of the induced regularizer. This result was first proved by \citet{ding2022flat}. 
For self-completeness, we also provide a short proof.

%

\begin{theorem}[\citet{ding2022flat}]\label{thm:induced_regularizer_depth_2}
For any $M\in\mathbb{R}^{d_L\times d_0}$, it holds that 
    \begin{align}\label{eq:induced_regularizer_depth_2}
        F(M)\triangleq \min_{W_2 W_1 = M} \tr[\nabla^2\gL](\mW) = 2\norm{\left(\tfrac{1}{n}\sum\nolimits_i A_iA_i^\top \right)^{\nicefrac{1}{2} } M \left(\tfrac{1}{n}\sum\nolimits_i A_i^\top A_i \right)^{\nicefrac{1}{2}} }_*.
    \end{align}
\end{theorem}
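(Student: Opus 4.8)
\emph{Proof sketch.} The plan is to specialize the general expression for the induced regularizer derived in \Cref{sec:exactformulation} to depth $L=2$, and then recognize the resulting weighted two-factor problem as the classical variational characterization of the nuclear norm. Concretely, from $\inner{A_i}{W_2W_1}=\tr(A_i^\top W_2W_1)$ one computes $\nabla_{W_1}\inner{A_i}{W_2W_1}=W_2^\top A_i$ and $\nabla_{W_2}\inner{A_i}{W_2W_1}=A_iW_1^\top$, so plugging these into \eqref{eq:F_jacobian} and setting $P\triangleq\tfrac1n\sum_i A_iA_i^\top$ and $Q\triangleq\tfrac1n\sum_i A_i^\top A_i$, the identities $\tfrac1n\sum_i\|A_i^\top W_2\|_F^2=\tr(W_2^\top P W_2)=\|P^{1/2}W_2\|_F^2$ and $\tfrac1n\sum_i\|W_1A_i^\top\|_F^2=\tr(W_1QW_1^\top)=\|W_1Q^{1/2}\|_F^2$ yield
\begin{align*}
F(M)=\min_{W_2W_1=M}\left(\|P^{1/2}W_2\|_F^2+\|W_1Q^{1/2}\|_F^2\right).
\end{align*}

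The lower bound is immediate and holds for arbitrary $P,Q$: for any feasible $(W_2,W_1)$, AM--GM followed by \Cref{lem:matrixfrobenius} (applied to $A=P^{1/2}W_2$ and $B=W_1Q^{1/2}$) gives $\|P^{1/2}W_2\|_F^2+\|W_1Q^{1/2}\|_F^2\ge 2\|P^{1/2}W_2\|_F\|W_1Q^{1/2}\|_F\ge 2\|P^{1/2}W_2W_1Q^{1/2}\|_*=2\|P^{1/2}MQ^{1/2}\|_*$. For the matching upper bound I would first assume $P,Q\succ 0$ and change variables to $\widetilde W_2=P^{1/2}W_2$, $\widetilde W_1=W_1Q^{1/2}$; invertibility makes this a bijection between factorizations of $M$ and factorizations of $N\triangleq P^{1/2}MQ^{1/2}$, so $F(M)=\min_{\widetilde W_2\widetilde W_1=N}\|\widetilde W_2\|_F^2+\|\widetilde W_1\|_F^2$. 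Taking an SVD $N=U\Sigma V^\top$ and choosing $\widetilde W_2=U\Sigma^{1/2}$, $\widetilde W_1=\Sigma^{1/2}V^\top$---legitimate since $d_1\ge\min(d_0,d_L)\ge\rank(N)$---makes both Frobenius terms equal $\tr(\Sigma)=\|N\|_*$, giving $F(M)\le 2\|P^{1/2}MQ^{1/2}\|_*$.

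It remains to drop the positive-definiteness assumption, which I would do by perturbation: applying the $P,Q\succ 0$ case to $P+\epsilon I$ and $Q+\epsilon I$ shows $F(M)\le 2\|(P+\epsilon I)^{1/2}M(Q+\epsilon I)^{1/2}\|_*$ for every $\epsilon>0$ (since enlarging $P,Q$ only increases the objective), and letting $\epsilon\downarrow 0$ the right side tends to $2\|P^{1/2}MQ^{1/2}\|_*$ by continuity; together with the lower bound this gives the claimed formula. I do not expect a genuine obstacle here---the whole content is the absorption of the weight matrices $P^{1/2},Q^{1/2}$ into the factors plus the equality case of the nuclear-norm variational identity. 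The two points needing a little care are exactly that equality case (which is where the width hypothesis on $d_1$ enters, so the balanced factorization is representable) and the degenerate measurement operator, handled by the perturbation above.
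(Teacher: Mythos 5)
Your proposal is correct and follows essentially the same route as the paper: reduce the depth-two trace of Hessian to $\|P^{1/2}W_2\|_F^2+\|W_1Q^{1/2}\|_F^2$, get the lower bound from AM--GM plus \Cref{lem:matrixfrobenius}, and attain it with a balanced SVD factorization of $P^{1/2}MQ^{1/2}$. The only (minor) difference is that the paper absorbs the weights via pseudo-inverses $B_1^\dagger,B_2^\dagger$ in the explicit construction, whereas you use an invertible change of variables plus an $\epsilon$-perturbation to handle singular $P,Q$ — both are fine.
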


\begin{proof}[Proof of \Cref{thm:induced_regularizer_depth_2}]
We first define $ {B_1} = (\sum_{i=1}^n A_i{A_i}^T)^{\frac{1}{2}}$ and $ {B_2} = (\sum_{i=1}^n {A_i}^TA_i)^{\frac{1}{2}}$.
Therefore we have that 
\begin{align*}
   \tr[\nabla^2\gL](\mW) = \sum_{i=1}^n \left( \|{A_i}^TW_2\|_F^2 + \|W_1{A_i}^T\|_F^2 \right)= \|B_1 W_2\|_F^2 + \|W_1B_2\|_F^2.
\end{align*}
Further applying \Cref{lem:matrixfrobenius}, we have that 
\begin{align*}
F(M)=& \min_{W_2 W_1 = M} \tr[\nabla^2\gL](\mW) =  \min_{W_2 W_1 = M} \sum_{i=1}^n \left( \|{A_i}^TW_2\|_F^2 + \|W_1{A_i}^T\|_F^2 \right) \\
\geq & \min_{W_2 W_1 = M} 2\|B_1W_2W_1 B_2\|_*^2
      =2\|B_1 M B_2\|_*^2.
\end{align*}
Next we show this lower bound of $F(M)$ can be attained. Let $U\Lambda V^T$ be the SVD  of $B_1 M B_2$. The equality condition happens for $W^*_2 = {B_1}^{\dagger}U\Lambda^{1/2},W^*_1 = \Lambda^{1/2}V^T {B_2}^{\dagger}$, where we have that $\sum_{i=1}^n \|{A_i}^TW^*_2\|_F^2 + \|W^*_1{A_i}^T\|_F^2 =2\|\Lambda\|_F^2 = 2\|B_1 M B_2\|_F^2$.
This completes the proof.
\end{proof}

The right-hand side in \eqref{eq:induced_regularizer_depth_2} will be very close to the nuclear norm of $M$ if the two extra multiplicative terms are close to the identity matrix. It turns out that $\{A_i\}_{i=1}^n$ satisfying the $(1,\delta)$-RIP exactly guarantees the two extra terms are $O(\delta)$-close to identity. However, the case for deep networks  where depth is larger than two is fundamentally different from the two-layer case, where one can obtain a closed form for $F$. To the best of our knowledge, it is open whether one obtain a closed form for the induced-regularizer for the trace of Hessian when $L > 2$. Nonetheless, in Section~\ref{sec:rip_preliminary}, we show that under RIP, we can still approximate it with the nuclear norm.


\section{Results for Measurements with Restricted Isometry Property (RIP)}\label{sec:RIP}
In this section, we present our main results for the generalization benefit of flatness regularization in deep linear networks. We structure the analysis as follows:
\begin{enumerate}[labelindent=10pt]
    \item In \Cref{sec:rip_preliminary}, we first recap some preliminaries on the RIP property.
    \item In \Cref{sec:induced_regularizer_rip}, we prove that the induced regularizer by trace of Hessian is approximately the power of nuclear norm for $(1,\delta)$-RIP measurements (\Cref{thm:RIPbias}).
    \item In \Cref{sec:recovering_groundtruth}, we prove that the minimum trace of Hessian interpolating solution with $(2,\delta)$-RIP measurements can recover the ground truth $M^*$ up to error $\delta \norm{M^*}_*^2$. For $\{A_i\}_{i=1}^n$ sampled from Gaussian distributions, we know $\delta = O(\sqrt{\frac{d_0+d_L}{n}})$. 
    \item  In \Cref{sec:fast_rate}, we prove a generalization bound with faster rate of $\frac{d_0+d_L}{n} \norm{M^*}_*^2$ using local Rademacher complexity based techniques from \citet{srebro2010smoothness}.
\end{enumerate}
Next, we discuss important distributions of measurements for which the RIP property holds.
\subsection{Preliminaries for RIP}\label{sec:rip_preliminary}
\begin{definition}[Restricted Isometry Property (RIP)]\label[definition]{defi:rip}
A family of matrices $\{A_i\}_{i=1}^n$ satisfies the $(r,\delta)$-RIP iff for any matrix $X$ with the same dimension and rank at most $r$:
\begin{align}
    (1-\delta)\|X\|_F^2\leq  \frac{1}{n}\sum\nolimits_{i=1}^n \langle A_i, X\rangle^2 \leq (1+\delta)\|X\|_F^2.\label{eq:ripdef}
\end{align}
	
\end{definition}


Next, we give two examples of distributions where $\Omega(r(d_0+d_L))$ samples guarantee $(r,O(1))$-RIP. 
The proofs follow from Theorem 2.3 in~\cite{candes2011tight}.
\begin{example}\label[example]{ex:gaussianexample}
Suppose for every $i\in \{1,\dots,n\}$, each entry in the matrix $A_i$ is an independent standard Gaussian random variable, \emph{i.e.}, $A_i \overset{i.i.d.}{\sim} \gG_{d_L \times d0}$. For every constant $\delta \in(0,1)$, if $n \geq \Omega(r(d_0+d_L))$, then with probability $1-e^{\Omega(n)}$,  $\{A_i\}_{i=1}^n$ satisfies $(r,\delta)$-RIP.

\end{example}
\begin{example}\label[example]{ex:bernoulli}
    If each entry of $A_i$ is from a symmetric Bernoulli random variable with variance $1$, i.e. for all $i,k,\ell$, entry $[A_i]_{k,\ell}$ is either equal to $1$ or $-1$ with equal probabilities, then for any $r$ and $\delta$, $(r,\delta)$-RIP holds with same probability as in  \Cref{ex:gaussianexample} if the same condition there is satisfied.
\end{example}

\subsection{Induced Regularizer of Trace of Hessian is Approximately Nuclear Norm}\label{sec:induced_regularizer_rip}

This section focuses primarily on the proof of \Cref{thm:ripgeneralization}. Our proof consists of two steps: (1) we show that the trace of Hessian of training loss at the minimizer $\mW$ is multiplicatively $O(\delta)$-close to the regularizer $R(\mW)$ defined below~(\Cref{lem:ripconcentration}) and (2) we show that the induced regularizer of $R$, $F'(M)$, is proportional to $\norm{M}_*^{2(L-1)/L}$ (\Cref{lem:Rclosedform}).
    \begin{align}
       \!\!\!\! R(\mW) \triangleq 
       \| W_L\dots W_{2}\|_F^2 d_0  + \sum_{j=2}^{L-1} \|W_{L}\dots W_{j+1}\|_F^2 \|W_{j-1}\dots W_{1}\|_F^2
        +\|W_{L-1}\dots W_1 \|_F^2 d_L.\label{eq:Rexplicitform}
    \end{align}
    
\begin{lemma}\label[lemma]{lem:ripconcentration} 
Suppose the linear measurement  $\{A_i\}_{i=1}^n$ satisfy $(1,\delta)$-RIP. Then, for any $\mW$ such that $\gL(\mW)=0$, it holds that 
\begin{align*}
    (1-\delta) R(\textbf{W}) \le  \textup{tr}(\nabla^2 L)(\textbf{W}) \leq (1+\delta) R(\textbf{W}).
\end{align*}
\end{lemma}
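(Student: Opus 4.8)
The plan is to start from the exact formula for the trace of Hessian derived just above in \eqref{eq:F_jacobian}, namely
\[
\tr(\nabla^2\gL)(\mW) = \frac1n\sum_{i=1}^n\sum_{j=1}^L \|W_{j-1}\cdots W_1 A_i^\top W_L\cdots W_{j+1}\|_F^2,
\]
and to compare it term-by-term (in $j$) with the corresponding term in $R(\mW)$. The key observation is that $\|W_{j-1}\cdots W_1 A_i^\top W_L\cdots W_{j+1}\|_F^2 = \langle A_i, X_{i,j}\rangle$-type expressions can be rewritten as an inner product of $A_i$ with a \emph{rank-one} matrix after we fix the "outer" factors. Concretely, for fixed $j$, write $P_j = W_{j-1}\cdots W_1 \in \mathbb{R}^{d_{j-1}\times d_0}$ and $Q_j = W_L\cdots W_{j+1}\in\mathbb{R}^{d_L\times d_j}$. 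Expanding the Frobenius norm, $\|P_j A_i^\top Q_j\|_F^2 = \tr(Q_j^\top P_j A_i^\top Q_j P_j^\top A_i^\top)^{\!*}$... more cleanly: $\|P_j A_i^\top Q_j\|_F^2 = \sum_{k,\ell} \langle A_i, (P_j)_{k,:}^\top (Q_j)_{:,\ell}^\top\rangle^2$, where $(P_j)_{k,:}$ ranges over rows of $P_j$ and $(Q_j)_{:,\ell}$ over columns of $Q_j$; each summand is $\langle A_i, u v^\top\rangle^2$ for $u = (Q_j)_{:,\ell}$, $v = (P_j)_{k,:}^\top$, hence an inner product with the rank-one matrix $uv^\top$. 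Summing over $i$ and applying the $(1,\delta)$-RIP bound \eqref{eq:ripdef} to each such rank-one $X = uv^\top$ gives
\[
(1-\delta)\|uv^\top\|_F^2 \le \frac1n\sum_i \langle A_i, uv^\top\rangle^2 \le (1+\delta)\|uv^\top\|_F^2,
\]
and $\|uv^\top\|_F^2 = \|u\|^2\|v\|^2$. Summing over all $k,\ell$ recovers $\|u\|$-sums that telescope into $\|P_j\|_F^2 \|Q_j\|_F^2$, since $\sum_k \|(P_j)_{k,:}\|^2 = \|P_j\|_F^2$ and $\sum_\ell \|(Q_j)_{:,\ell}\|^2 = \|Q_j\|_F^2$. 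Thus the $j$-th term of $\tr(\nabla^2\gL)$ is sandwiched between $(1\pm\delta)\|W_{j-1}\cdots W_1\|_F^2\,\|W_L\cdots W_{j+1}\|_F^2$, which is exactly the $j$-th term of $R(\mW)$ (with the conventions $\|W_0\cdots W_1\|_F^2 := d_0$ for $j=1$ — the empty product being the identity $I_{d_0}$, whose squared Frobenius norm is $d_0$ — and $\|W_L\cdots W_{L+1}\|_F^2 := d_L$ for $j=L$).

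The two boundary terms $j=1$ and $j=L$ deserve a separate sentence because there the "rank-one decomposition" degenerates: for $j=1$ the expression is $\|A_i^\top W_L\cdots W_2\|_F^2 = \sum_\ell \langle A_i, e_\ell^{(0)} (W_L\cdots W_2)_{:,\ell}^\top\rangle$... more precisely $\|A_i^\top Q_1\|_F^2 = \sum_{\ell,m}\langle A_i, (Q_1)_{:,m} e_\ell^\top\rangle^2$ summed appropriately, and the rank-one matrices are $(Q_1)_{:,m} e_\ell^\top$ with $\sum_\ell \|e_\ell\|^2 = d_0$, giving the factor $d_0$; symmetrically the $j=L$ term produces the factor $d_L$. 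Since the RIP inequality is applied only to rank-one (hence rank-$\le 1$) matrices, the $(1,\delta)$-RIP hypothesis is exactly what is needed and nothing stronger.

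Finally, summing the $L$ two-sided bounds over $j$ and dividing by $n$ (already incorporated above) yields $(1-\delta)R(\mW) \le \tr(\nabla^2\gL)(\mW) \le (1+\delta)R(\mW)$, which is the claim. I expect the only mildly delicate point to be bookkeeping the boundary conventions consistently between \eqref{eq:F_jacobian} and \eqref{eq:Rexplicitform} — i.e., checking that the empty matrix products are interpreted as identities of the correct sizes $d_0$ and $d_L$, so that the first and last summands of $R$ genuinely match the $j=1$ and $j=L$ Jacobian terms — rather than any real analytic obstacle; the heart of the argument is just the rank-one expansion of the Frobenius norm followed by a direct application of the RIP definition.
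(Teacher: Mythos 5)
Your proposal is correct and follows essentially the same route as the paper's proof: expand each Frobenius norm $\|W_{j-1}\cdots W_1 A_i^\top W_L\cdots W_{j+1}\|_F^2$ entrywise into terms $\langle A_i, uv^\top\rangle^2$ with $u$ a column of $W_L\cdots W_{j+1}$ and $v$ a row of $W_{j-1}\cdots W_1$, apply $(1,\delta)$-RIP to each rank-one matrix, sum over rows and columns to obtain $(1\pm\delta)\|W_{j-1}\cdots W_1\|_F^2\|W_L\cdots W_{j+1}\|_F^2$, and handle $j=1$ and $j=L$ with standard basis vectors yielding the factors $d_0$ and $d_L$. The only blemishes are cosmetic (a transposed rank-one factor in one intermediate display), not gaps.
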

%

 Since $\textup{tr}(\nabla^2 \gL)(\mW)$ closely approximates $R(\mW)$, we can study $R$ instead of $\tr[\nabla^2 \gL]$ to understand the implicit bias up to a multiplicative factor $(1+\delta)$. In particular, we want to solve the induced regularizer of $R(\mW)$ on the space of end-to-end matrices, $F'(M)$: 
 \begin{align}
     F'(M)\triangleq \min_{\mW:W_L\cdots W_1 = M} R(\textbf W). \label{eq:induced_regularizer_population}
 \end{align}
 Surprisingly, we can solve this problem in closed form.
\begin{lemma}\label[lemma]{lem:Rclosedform}
    For any $M\in\mathbb{R}^{d_L\times d_0}$, it holds that 
    \begin{align}
    F'(M) \triangleq \min_{ \mW:  \ W_L\dots W_1=M} R(\textbf W) =
     L{(d_0 d_L)}^{1/L} \|M\|_*^{2(L-1)/L}.\label{eq:Fdefinition}
    \end{align}
\end{lemma}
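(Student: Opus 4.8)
The plan is to prove the two inequalities $F'(M)\ge L(d_0d_L)^{1/L}\|M\|_*^{2(L-1)/L}$ and $F'(M)\le L(d_0d_L)^{1/L}\|M\|_*^{2(L-1)/L}$ separately, the first as a bound valid for \emph{every} factorization $W_L\cdots W_1=M$, the second by an explicit construction. Throughout I introduce the prefix and suffix products $P_j := W_{j-1}\cdots W_1$ (with the convention $P_1:=I_{d_0}$) and $S_j := W_L\cdots W_{j+1}$ (with $S_L:=I_{d_L}$). Unwinding the definition \eqref{eq:Rexplicitform} of $R$, and using $\|P_1\|_F^2=d_0$ and $\|S_L\|_F^2=d_L$, gives the uniform description $R(\mW)=\sum_{j=1}^L \|S_j\|_F^2\,\|P_j\|_F^2$, together with the identity $S_jP_{j+1}=W_L\cdots W_1=M$ for every $j\in[L-1]$.

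For the lower bound, I would apply \Cref{lem:matrixfrobenius} to each factorization $M=S_jP_{j+1}$ to obtain $\|M\|_*\le \|S_j\|_F\|P_{j+1}\|_F$ for $j=1,\dots,L-1$, and multiply these $L-1$ inequalities. Since the $P$'s appearing are $P_2,\dots,P_L$ and the $S$'s are $S_1,\dots,S_{L-1}$, reinserting the boundary factors $\|P_1\|_F^2=d_0$ and $\|S_L\|_F^2=d_L$ yields $\prod_{j=1}^L \|S_j\|_F^2\|P_j\|_F^2 = d_0d_L\prod_{j=1}^{L-1}\big(\|S_j\|_F^2\|P_{j+1}\|_F^2\big)\ge d_0d_L\,\|M\|_*^{2(L-1)}$. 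Applying AM--GM to the $L$ nonnegative summands of $R(\mW)$ then gives $R(\mW)\ge L\big(\prod_{j=1}^L\|S_j\|_F^2\|P_j\|_F^2\big)^{1/L}\ge L(d_0d_L)^{1/L}\|M\|_*^{2(L-1)/L}$, and taking the infimum over $\mW$ with $E(\mW)=M$ gives the lower bound on $F'(M)$.

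For the upper bound I would exhibit a matching factorization. Let $r=\rank(M)$ and take a reduced SVD $M=\hat U\,\diag(\sigma_1,\dots,\sigma_r)\,\hat V^\top$ with $\hat U\in\mathbb{R}^{d_L\times r}$, $\hat V\in\mathbb{R}^{d_0\times r}$; since $d_j\ge\min(d_0,d_L)\ge r$ for $j\in[L-1]$, choose matrices $A_j\in\mathbb{R}^{d_j\times r}$ with orthonormal columns. For free scalars $t_1,\dots,t_{L-1}>0$, set $\Sigma^{1/2}:=\diag(\sqrt{\sigma_1},\dots,\sqrt{\sigma_r})$ and define $W_1=t_1 A_1\Sigma^{1/2}\hat V^\top$, $W_L=t_{L-1}^{-1}\hat U\Sigma^{1/2}A_{L-1}^\top$, and $W_j=(t_j/t_{j-1})A_jA_{j-1}^\top$ for $2\le j\le L-1$. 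Using $A_j^\top A_j=I_r$ one checks $W_L\cdots W_1=\hat U\Sigma^{1/2}\Sigma^{1/2}\hat V^\top=M$, that the middle products telescope to $t_{j-1}A_{j-1}\Sigma^{1/2}\hat V^\top$ and $t_j^{-1}\hat U\Sigma^{1/2}A_j^\top$, and hence $\|P_j\|_F^2=t_{j-1}^2\|M\|_*$ for $j\ge2$ while $\|S_j\|_F^2=t_j^{-2}\|M\|_*$ for $j\le L-1$. Substituting into $R$ gives $R(\mW)=d_0\|M\|_*/t_1^2+\|M\|_*^2\sum_{j=2}^{L-1}t_{j-1}^2/t_j^2+d_Lt_{L-1}^2\|M\|_*$, whose $L$ terms have product telescoping to the constant $d_0d_L\|M\|_*^{2(L-1)}$; by AM--GM its minimum over $t$ is exactly $L(d_0d_L)^{1/L}\|M\|_*^{2(L-1)/L}$, attained by the (unique, log-linear) choice of $t_j$ equalizing all $L$ terms. (The case $M=0$ is trivial and $L=2$ is the special case with no middle layers.)

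The main obstacle is the achievability: making the AM--GM step \emph{and} every invocation of \Cref{lem:matrixfrobenius} tight simultaneously, which forces all the intermediate products $P_j,S_j$ to be mutually aligned with proportional singular-value profiles. The key realization is that one should place the singular-vector rotations $\hat U,\hat V$ and the ``half'' singular values $\Sigma^{1/2}$ entirely on the first and last layers, let the interior layers be scaled partial isometries $\propto A_jA_{j-1}^\top$ (so each $P_j$, $S_j$ has singular values proportional to $\sqrt{\sigma_i}$), and then balance the two boundary dimension factors $d_0,d_L$ against the bulk via the one-parameter-per-layer rescaling $t_j$. Once this factorization is guessed, verifying it reduces to routine Frobenius-norm computations that exploit orthonormality of the $A_j$'s.
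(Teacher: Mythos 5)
Your proposal is correct and follows essentially the same route as the paper's proof: the lower bound combines AM--GM over the $L$ summands of $R$ with the inequality $\|AB\|_*\le\|A\|_F\|B\|_F$ applied to each split $M=S_jP_{j+1}$, and the upper bound uses the same SVD-based construction with $\Sigma^{1/2}$ on the boundary layers and scaled partial isometries in between. The only (cosmetic) difference is that you keep $L-1$ free scalars $t_j$ and invoke the constancy of the product to conclude tightness via AM--GM, whereas the paper fixes the equivalent geometric scaling $\alpha,\alpha'$ explicitly.
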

\begin{proof}[Proof of \Cref{lem:Rclosedform}]
Applying the $L$-version of the AM-GM to Equation~\eqref{eq:Rexplicitform}:
\begin{align*}
    \left(R(\textbf W)/L\right)^L \geq & d_0 \|W_L\cdots W_{2}\|_F^2 \cdot \|W_1\|_F^2 \|W_L\cdots W_3\|_F^2\cdots   \|W_{L-1}\cdots  W_1\|_F^2 d_L.\numberthis\label{eq:AMGM} \\
    = & d_0 d_L  \prod_{j=1}^{L-1}\left( \norm{W_L\cdots W_{j+1}}_F^2\norm{W_j\cdots W_{1}}_F^2 \right)
\end{align*} 
Now using Lemma~\ref{lem:matrixfrobenius}, we have for every $1 \leq j \leq L-1$:
\begin{align}
    \|W_L\dots W_{j+1}\|_F^2\|W_{j}\dots W_1\|_F^2 \geq \|W_L\dots W_1\|_*^2  = \|M\|_*^2.\label{eq:frobholder}
\end{align}
Multiplying Equation~\eqref{eq:frobholder} for all $1 \leq j \leq L-1$ and combining with Equation~\eqref{eq:AMGM} implies 
\begin{align}
    \min_{ \{W| \ W_L\dots W_1=M\} } R(\textbf W)
    \geq L (d_0d_L)^{1/L}\|M\|_*^{2(L-1)/L}.\label{eq:minimumofR}
\end{align}
Now we show that equality can indeed be attained. To construct an example in which the equality happens, consider the singular value decomposition of $M$: $M = U \Lambda V^T$, where $\Lambda$ is a square matrix with dimension $\mathrm{rank}(M)$.

For $1\le i \le L$, we pick $Q_i\in \mathbb{R}^{d_i \times \mathrm{rank}(M)}$ to be any matrix with orthonormal columns. Note that $\mathrm{rank}(M)$ is not larger than $d_i$ for all $1\leq i\leq L$, hence such orthonormal matrices $Q_i$ exist. Then we define the following with $\alpha,\alpha'>0$ being constants to be determined:
\begin{align*}
    &W_L = \alpha' \alpha^{-(L-2)/2}U\Lambda^{1/2}{Q_{L-1}}^T \in \mathbb{R}^{d_L\times d_{L-1}},\\
    &W_{i} = \alpha Q_{i}{Q_{i-1}}^T \in \mathbb{R}^{d_i\times d_{i-1}}, \quad \forall 2\leq i \leq L-1,\\ 
    &W_1 = {\alpha'}^{-1} \alpha^{-(L-2)/2}Q_1\Lambda^{1/2} V^T \in \mathbb{R}^{d_1\times d_{0}}.
\end{align*}
Note that $\Lambda$ is a square matrix with dimension $\mathrm{rank}(M)$.
First of all, note that the defined matrices satisfy 
\begin{align*}
W_L W_{L-1}\dots W_1 &= \alpha^{L-2}\alpha^{-(L-2)}U\Lambda^{1/2}\Lambda^{1/2}V^T = M.
\end{align*}
To gain some intuition, we check that the equality case for all the inequalities that we applied above. We set the value of $\alpha$ in a way that these equality cases can hold simultaneously. 
Note that for the matrix holder inequality that we applied in Equation~\eqref{eq:frobholder}:
\begin{align*}
    \|W_L\dots W_{j+1}\|_F^2\|W_{j}\dots W_1\|_F^2 = \|W_L\dots W_1\|_*^2 = \|\Lambda^{1/2}\|_F^2,
\end{align*}
independent of the choice of $\alpha$. It remains to check the equality case for the AM-GM inequality that we applied in \Cref{eq:AMGM}. We have for all $2\leq j \leq L-1$:
\begin{align*}
    &\|W_L\dots W_{j+1}\|_F \|W_{j-1}\dots W_1\|_F \\
    &= \alpha^{j-2}\alpha^{-(L-2)/2}\alpha^{L-j-1}\alpha^{-(L-2)/2}\|U\Lambda^{1/2}\|_F\|\Lambda^{1/2}V^T\|_F = \alpha^{-1}\|\Lambda^{1/2}\|_F^2,\numberthis\label{eq:nonboundarycase}
\end{align*}
Hence, equality happens for all of them.
Moreover, for cases $j = 1$ and $j = L$, we have
\begin{align}
    &d_0\|W_L\dots W_2\| = \|\Lambda^{1/2}\|_F d_0 \alpha'\alpha^{L-2}\alpha^{-(L-2)/2} = \|\Lambda^{1/2}\|_F d_0\alpha'\alpha^{(L-2)/2}.\label{eq:boundarycasezero}\\
    &d_L\|W_{L-1}\dots W_{1}\| = \|\Lambda^{1/2}\|_F d_L{\alpha'}^{-1} \alpha^{L-2}\alpha^{-(L-2)/2} = \|\Lambda^{1/2}\|_F d_L{\alpha'}^{-1}\alpha^{(L-2)/2}.\label{eq:boundarycase}
\end{align}
Thus it suffices to  set $\alpha'=(\frac{d_L}{d_0})^{1/2}$ and  $\alpha = (\frac{\|\Lambda^{1/2}\|_F}{\sqrt{d_0d_L}})^{2/L} = (\frac{\|M\|_*}{d_0d_L})^{1/L}$ so that the left-hand sides of \eqref{eq:nonboundarycase}, \eqref{eq:boundarycasezero}, and \eqref{eq:boundarycase} are equal, which implies that the lower bound in Equation~\eqref{eq:minimumofR} is actually an equality. The proof is complete.
\end{proof}

Now we can prove \Cref{thm:RIPbias} as an implication of \Cref{lem:Rclosedform}.

\begin{proof}[Proof of \Cref{thm:RIPbias}]
The first claim is a corollary of \Cref{lem:ripconcentration}. 
We note that
\begin{align*}
    F(M) &= \min_{W_L\dots W_1 = M} \tr[\nabla^2 \mathcal L](M) \leq (1+\delta)\min_{W_L\dots W_1 = M} R(\mathbf W)= (1+\delta)F'(M) \\
    F(M) &= \min_{W_L\dots W_1 = M} \tr[\nabla^2 \mathcal L](M) \geq (1-\delta) \min_{W_L\dots W_1 = M} R(\mathbf W) = (1-\delta) F'(M).
\end{align*}
    For the second claim, pick $\bar{\mathbf W}$ that minimizes $R(\bar{\mathbf W})$ over all $\mathbf W$'s that satisfy the linear measurements, thus we have that
    \begin{align}\label{eq:R_Wmin}
        R(\bar{\mathbf W}) = L(d_0d_L)^{1/L}{\|E(\bar{\mW})\|_*}^{2(L-1)/L} = L(d_0d_L)^{1/L}{\min_{ \gL'(M)=0}\|M\|_*}^{2(L-1)/L}.
    \end{align}
    
    Now from the definition of $E(\mW^*)$, 
    \begin{align}
        \textup{tr}(\nabla^2 L)(\mathbf W^*) \leq \textup{tr}(\nabla^2 L)(\bar{\mathbf W})\leq (1+\delta)R(\bar{\mathbf W}),\label{eq:aval}
    \end{align}
    where the last inequality follows from the definition of $W$. On the other hand
    \begin{align}
        \textup{tr}(\nabla^2 L)({\mathbf W}^*) \geq (1-\delta)R(\bar{\mathbf W}) \geq (1-\delta)L(d_0 d_L)^{1/L}{\|E({\mathbf{W}}^*)\|_*}^{2(L-1)/L}.\label{eq:dovom}
    \end{align}
    Combining \eqref{eq:R_Wmin}, \eqref{eq:aval} and \eqref{eq:dovom},
    \begin{align*}
        \|E({\mathbf{W}}^*)\|_* \leq (\frac{1+\delta}{1-\delta})^{\frac{L}{2(L-1)}} \min_{ \gL'(M)=0}\|M\|_*.
    \end{align*}
    The proof is  completed by noting that $\frac{L}{2(L-1)}\leq 1$ for all $L\ge 2$. 
\end{proof}

Thus combining \Cref{ex:gaussianexample} and \Cref{thm:RIPbias} with $\delta =1/2$, we have the following corollary. 
\begin{corollary}\label[corollary]{cor:high_prob_min_trace_of_hessian}
	Let $\{A_i\}_{i=1}^n$ be sampled independently from Gaussian distribution $\gG_{d_L \times d_0}$ where $n\ge \Omega((d_0+d_L))$, with probability at least $1-\exp(\Omega(n))$, we have 
	    \begin{align*}
        \| E(\mW^*)\|_* \leq 3 \min_{ \gL'(M)=0}\|M\|_* \le 3 \norm{ E(\mW^*)}_*.
    \end{align*}
\end{corollary}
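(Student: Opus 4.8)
The plan is to read this off as a direct specialization of the two ingredients already established: the high-probability RIP guarantee for Gaussian measurements from \Cref{ex:gaussianexample} and the deterministic comparison bound of \Cref{thm:RIPbias}(2), both instantiated at $\delta = \tfrac12$.

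First I would apply \Cref{ex:gaussianexample} with $r=1$ and $\delta = \tfrac12$. Since the entries of each $A_i$ are i.i.d.\ standard Gaussian and the hypothesis $n \ge \Omega(d_0+d_L)$ exactly matches the required sample complexity $\Omega\bigl(r(d_0+d_L)\bigr)$ when $r=1$, the example gives that with probability at least $1 - e^{\Omega(n)}$ the family $\{A_i\}_{i=1}^n$ satisfies the $(1,\tfrac12)$-RIP condition. I would then condition on this event for the remainder of the argument, so that the failure probability of the final statement is exactly $\exp(-\Omega(n))$.

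On this RIP event, \Cref{thm:RIPbias}(2) applies verbatim with $\delta = \tfrac12$: for any $\mW^* \in \argmin_{\mW:\gL(\mW)=0}\tr[\nabla^2\gL(\mW)]$ it yields
\[
\|E(\mW^*)\|_* \;\le\; \frac{1+\delta}{1-\delta}\,\min_{\gL'(M)=0}\|M\|_* \;=\; \frac{1+\tfrac12}{1-\tfrac12}\,\min_{\gL'(M)=0}\|M\|_* \;=\; 3\,\min_{\gL'(M)=0}\|M\|_*,
\]
which is the first inequality of the corollary. For the second inequality I would observe that $\gL(\mW^*)=0$ forces $\inner{A_i}{E(\mW^*)} = b_i$ for every $i\in[n]$, i.e.\ $E(\mW^*)$ is itself a feasible point of the minimization $\min_{\gL'(M)=0}\|M\|_*$; hence $\min_{\gL'(M)=0}\|M\|_* \le \|E(\mW^*)\|_*$. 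Chaining the two bounds completes the proof.

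There is no genuine obstacle here beyond bookkeeping: one only needs to verify that plugging $\delta = \tfrac12$ into $\tfrac{1+\delta}{1-\delta}$ produces the constant $3$ (and that, since the exponent $\tfrac{L}{2(L-1)}\le 1$ was already absorbed inside the statement of \Cref{thm:RIPbias}, no depth-dependent power survives), and that the single conditioning event carries the claimed probability. The only available lever, should one want a smaller constant or a sharper failure probability, is the choice of $\delta$ in \Cref{ex:gaussianexample}, which trades the multiplicative factor $\tfrac{1+\delta}{1-\delta}$ against the constant hidden in $\Omega(n)$.
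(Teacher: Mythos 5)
Your proposal is correct and is exactly the argument the paper intends: the corollary is stated in the text as an immediate combination of \Cref{ex:gaussianexample} (giving $(1,\tfrac12)$-RIP with probability $1-\exp(-\Omega(n))$) and \Cref{thm:RIPbias}, with $\tfrac{1+\delta}{1-\delta}=3$ at $\delta=\tfrac12$, and the second inequality following from feasibility of $E(\mW^*)$. No further comment is needed.
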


\subsection{Recovering the Ground truth}\label{sec:recovering_groundtruth}
In this section, we prove Theorem~\ref{thm:ripgeneralization}. The idea is to show that under RIP, the empirical loss $\gL(\mW)$ is a good approximation for the Frobenius distance of $E(\mW)$ to the ground truth $M^*$. To this end, we first introduce a very useful Lemma~\ref{lem:RIPrelax} below, whose proof is deferred to \Cref{sec:omitted_proofs}.

\begin{lemma}\label[lemma]{lem:RIPrelax}
    Suppose the measurements $\{A_i\}_{i=1}^n$ satisfy the $(2,\delta)$-RIP condition. Then for any matrix $M\in\mathbb{R}^{d_L\times d_0}$, we have that 
    \begin{align*}
        \Big| \frac{1}{n} \sum\nolimits_{i=1}^n \inner{A_i}{M}^2- \|M\|_F^2 \Big| \leq 2 \delta\|M\|^2_*.
    \end{align*}
\end{lemma}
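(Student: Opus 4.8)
The plan is to reduce everything to the rank-$\le 2$ RIP inequality via the singular value decomposition together with a polarization identity. First I would write the SVD $M = \sum_{k=1}^{r}\sigma_k u_k v_k^\top$ with $r=\rank(M)$, the vectors $\{u_k\}$ orthonormal and $\{v_k\}$ orthonormal, and set $M_k \triangleq u_k v_k^\top$. Then $\{M_k\}$ is an orthonormal family in the Frobenius inner product, since $\inner{M_k}{M_l} = (u_k^\top u_l)(v_k^\top v_l) = \ind[k=l]$; in particular $\norm{M_k}_F = 1$. Because $A\mapsto\inner{A}{\cdot}$ is linear, expanding the square gives
\begin{align*}
  \frac1n\sum_{i=1}^n\inner{A_i}{M}^2 = \sum_{k,l=1}^{r}\sigma_k\sigma_l\cdot\frac1n\sum_{i=1}^n\inner{A_i}{M_k}\inner{A_i}{M_l},
\end{align*}
whereas $\norm{M}_F^2 = \sum_k\sigma_k^2 = \sum_{k,l}\sigma_k\sigma_l\,\ind[k=l]$. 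Writing $\inner{X}{Y}_n \triangleq \frac1n\sum_i\inner{A_i}{X}\inner{A_i}{Y}$, it therefore suffices to prove the entrywise estimate $|\inner{M_k}{M_l}_n - \ind[k=l]| \le \delta$ for all $k,l$; summing against the nonnegative weights $\sigma_k\sigma_l$ then yields a bound of $\delta\sum_{k,l}\sigma_k\sigma_l = \delta\big(\sum_k\sigma_k\big)^2 = \delta\norm{M}_*^2 \le 2\delta\norm{M}_*^2$, which is the claim.

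For the entrywise estimate, the diagonal case $k=l$ is immediate: $M_k$ has rank $1\le 2$ and $\norm{M_k}_F=1$, so $(2,\delta)$-RIP gives $\inner{M_k}{M_k}_n\in[1-\delta,1+\delta]$. For an off-diagonal pair $k\ne l$ I would invoke the polarization identity $\inner{M_k}{M_l}_n = \tfrac14\big(\inner{M_k+M_l}{M_k+M_l}_n - \inner{M_k-M_l}{M_k-M_l}_n\big)$. Each of $M_k\pm M_l$ is a sum of two rank-one matrices, hence has rank at most $2$, and by Frobenius-orthonormality $\norm{M_k\pm M_l}_F^2 = \norm{M_k}_F^2 + \norm{M_l}_F^2 = 2$ (the cross term vanishes). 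Applying $(2,\delta)$-RIP to both terms gives $\inner{M_k\pm M_l}{M_k\pm M_l}_n\in[2(1-\delta),2(1+\delta)]$, so their difference lies in $[-4\delta,4\delta]$ and hence $|\inner{M_k}{M_l}_n|\le\delta$. This establishes the entrywise bound and completes the proof.

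I do not expect a genuine obstacle here: this is the standard fact that RIP forces the empirical operator $\inner{\cdot}{\cdot}_n$ to be close to the true Frobenius inner product on low-rank matrices. The only points that require care are bookkeeping ones: verifying that the SVD-induced rank-one matrices $M_k$ are \emph{exactly} Frobenius-orthonormal so that $\norm{M_k\pm M_l}_F^2=2$ with no cross term, keeping the rank of $M_k\pm M_l$ within the budget of $2$, and using $\norm{M}_*^2 = \big(\sum_k\sigma_k\big)^2$ at the final step. In fact the argument above yields the sharper constant $\delta$ rather than $2\delta$; the looser factor in the statement merely leaves slack that is convenient when this lemma is fed into the recovery argument for Theorem~\ref{thm:ripgeneralization}.
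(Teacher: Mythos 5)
Your proof is correct and follows essentially the same route as the paper's: decompose $M$ via its SVD into Frobenius-orthonormal rank-one terms, use the polarization identity to reduce the cross terms to rank-two matrices $u_kv_k^\top \pm u_lv_l^\top$, and apply the $(2,\delta)$-RIP to each. Your bookkeeping is cleaner and yields the sharper constant $\delta\|M\|_*^2$ (the paper bounds the diagonal and off-diagonal contributions separately as $\delta\|M\|_F^2 + \delta\|M\|_*^2$ before relaxing to $2\delta\|M\|_*^2$), so the stated inequality follows a fortiori.
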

We note that if $\{A_i\}_{i=1}^n$ are i.i.d. random matrices with each coordinate being independent, zero mean, and unit variance (like standard Gaussian distribution), then $\|W - M^*\|_F^2$ is the population squared loss corresponding to $W$. Thus, \Cref{thm:ripgeneralization} implies a generalization bound for this case. Now we are ready to prove \Cref{thm:ripgeneralization}.
\begin{proof}[Proof of \Cref{thm:ripgeneralization}]
       Note that from \Cref{thm:RIPbias},
    \begin{align*}
        \| E(\mW)\|_* \leq \frac{1 + \delta}{1-\delta} \min_{ \gL'(M)=0}\|M\|_* \leq \frac{1+\delta}{1-\delta}\|M^*\|_*,
    \end{align*}
    which implies the following by triangle inequality,
    \begin{align}
        \| E(\mW) - M^*\|_* &\leq \|\tilde E(\mW)\|_* + \| M^*\|_*
        \leq \frac{2}{1-\delta}\| M^*\|_*.\label{eq:secondeq}
    \end{align}
    Combining~\eqref{eq:secondeq} with Lemma~\ref{lem:RIPrelax} (with $M =E(\mW) - M^*$):
    \begin{align*}
         \Big| \frac{1}{n} \sum\nolimits_{i=1}^n \inner{A_i}{E(\mW^*) - M^*}^2 - \|E(\mW^*) - M^*\|_F^2 \Big| \leq  \frac{8\delta}{(1-\delta)^2}\| M^*\|_*^2.
    \end{align*}
    Since $W^*$ satisfies the linear constraints $\tr(A_i E(\mathbf W^*)) = b_i$, $\frac{1}{n} \sum_{i=1}^n \inner{A_i}{E(\mW^*) - M^*}^2 =  \frac{1}{n} \sum_{i=1}^n \big(\inner{A_i}{E(\mW^*)} - b_i\big)^2 = 0$, which completes the proof.
\end{proof}


\subsection{Generalization Bound}\label{sec:fast_rate}
In this section, we prove the generalization bound in Theorem~\ref{thm:generalizationfastrate}, which yields a faster rate of $O(\frac{d_0+d_L}{n}\norm{M^*}_*^2)$ compared to $O(\sqrt{\frac{d_0+d_L}{n}}\norm{M^*}_*^2)$ in Theorem~\ref{thm:ripgeneralization}. The intuition for this is as follows: By \Cref{cor:high_prob_min_trace_of_hessian}, we know that with very high probability, the learned solution has a bounded nuclear norm for its end-to-end matrix, no larger than $3\norm{M^*}_2$, where $M^*$ is the ground truth.
The key mathematical tool is \Cref{thm:fast_rate_rademacher_complexity}, which provides an upper bound on the population error of the learned interpolation solution that is proportional to the square of the Rademacher complexity of the function class $\gH_{3\|M^*\| _*} = \{h_M\mid \norm{M}_*\le 3\norm{M^*}_*\}$.

\begin{theorem}[Theorem 1, \citet{srebro2010smoothness}]\label[theorem]{thm:fast_rate_rademacher_complexity}
	Let $\gH$ be a class of real-valued functions and $\ell:\mathbb{R}\times\mathbb{R}\to \mathbb{R}$ be a differentiable non-negative loss function satisfying that (1) for any fixed $y\in\mathbb{R}$, the partial derivative $\ell(\cdot,y)$ with respect to its first coordinate is $H$-Lipschitz and (2) $|\sup_{x,y}\ell(x,y)|\le B$, where $H,B$ are some positive constants. Then for any $p>0$, we have that with probability at least $1-p$ over a random sample of size $n$, for any $h\in\gH$ with zero training loss,
	\begin{align}\label{eq:fast_rate}
	\Lbar(h)\le O\left( H\log^3 n \gR_n^2(\gH) + \frac{B\log (1/p)}{n}\right).
	\end{align}

\end{theorem}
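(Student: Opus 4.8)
The statement is the realizable (zero-training-loss) special case of the fast-rate theorem of \citet{srebro2010smoothness}, and the plan is to reconstruct their argument. The engine is a \emph{self-bounding} property of smooth non-negative losses: if $\ell\ge 0$ and $t\mapsto\partial_1\ell(t,y)$ is $H$-Lipschitz, then evaluating $\ell$ at the gradient step $t'=t-\partial_1\ell(t,y)/H$ and using $0\le\ell(t',y)\le\ell(t,y)-\tfrac{1}{2H}(\partial_1\ell(t,y))^2$ yields $(\partial_1\ell(t,y))^2\le 2H\,\ell(t,y)$; equivalently, $t\mapsto\sqrt{\ell(t,y)}$ is $\sqrt{H/2}$-Lipschitz for every fixed $y$. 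I would exploit this twice. \emph{(i) Contraction.} Writing $g_h(x,y)\triangleq\sqrt{\ell(h(x),y)}\in[0,\sqrt B]$ and $\ell_h\triangleq g_h^2$, Talagrand's one-sided contraction lemma (which does not require the link function to vanish at $0$) gives $\gR_n(\{g_h:h\in\gH\})\le\sqrt{H/2}\,\gR_n(\gH)$. \emph{(ii) Bernstein condition.} Since $0\le\ell_h\le B$ pointwise, $\mathbb E[\ell_h^2]\le B\,\mathbb E[\ell_h]=B\,\Lbar(h)$, so $\mathrm{Var}(\ell_h)\le B\,\Lbar(h)$ and the localized subclass $\{\ell_h:\Lbar(h)\le r\}$ has $L_2(P)$-radius at most $\sqrt{Br}$.

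Next I would run the standard local-Rademacher-complexity argument for Bernstein classes (Bartlett--Bousquet--Mendelson, or the self-contained version in \citet{srebro2010smoothness}): applying Talagrand's concentration inequality to $\sup_{h\in\gH}\big(\Lbar(h)-2\hat L(h)\big)$, where $\hat L(h)=\tfrac1n\sum_i\ell(h(x_i),y_i)$ is the training loss, and feeding in the variance bound, one obtains that with probability at least $1-p$, for every $h\in\gH$,
\[
\Lbar(h)\ \le\ 2\,\hat L(h)\ +\ c\Big(r^\star+\tfrac{B\log(1/p)}{n}\Big),
\]
where $r^\star$ is the fixed point of the sub-root map $r\mapsto c'\,\mathbb E\,\gR_n\big(\{\ell_h:\Lbar(h)\le r\}\big)$. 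The constant $2$ in front of $\hat L$ is harmless here because we will set $\hat L(h)=0$; obtaining $1+o(1)$ instead is the only extra work needed for the general, non-realizable version of the theorem.

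The heart of the matter is bounding the localized Rademacher complexity and hence $r^\star$. The claim is
\[
\gR_n\big(\{\ell_h:\Lbar(h)\le r\}\big)\ \le\ c\,\sqrt{Hr}\;\gR_n(\gH)\cdot\mathrm{polylog}(n),
\]
which I would prove by first passing, via a two-sided concentration step, from the population constraint $\Lbar(h)\le r$ to the empirical constraint $\tfrac1n\sum_i g_h(x_i,y_i)^2\lesssim r$, then writing $\ell_h=g_h\cdot g_h$, bounding one factor by its empirical $\ell_2$-norm $O(\sqrt r)$ on the localized set and contracting through the other factor using step (i); a dyadic peeling over scales of $r$ (and the final translation of $\mathbb E\,\gR_n(\gH)$ back to the data-dependent $\gR_n(\gH)$) is what contributes the $\mathrm{polylog}(n)$ overhead, sharpened in the theorem statement to $\log^3 n$. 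Solving $r^\star\asymp\sqrt{Hr^\star}\,\gR_n(\gH)\cdot\mathrm{polylog}(n)$ gives $r^\star\lesssim H\log^3 n\,\gR_n^2(\gH)$ — the squaring of $\gR_n$ here is exactly the fast-rate gain. Plugging this into the displayed bound and using $\hat L(h)=0$ for an interpolating $h$ yields $\Lbar(h)\le O\big(H\log^3 n\,\gR_n^2(\gH)+\tfrac{B\log(1/p)}{n}\big)$, as claimed.

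The main obstacle is precisely the localized-Rademacher-complexity step. Contraction by itself is blind to localization, so one must interleave it with (a) the Bernstein relation between the $L_2$-radius and the mean, (b) a two-sided concentration step to exchange population- and empirical-level localization, and (c) a peeling argument over dyadic scales of $r$; it is this interplay — not any single ingredient — that is delicate, and it is where the $\log^3 n$ slack (an artifact of the proof technique rather than an essential feature of the bound) enters.
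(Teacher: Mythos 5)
This statement is not proved in the paper at all: it is imported verbatim (as Theorem~1 of \citet{srebro2010smoothness}) and used as a black box in the proof of \Cref{thm:generalizationfastrate}, so there is no in-paper argument to compare against; the relevant comparison is with the original proof in that reference. Your outline reproduces its skeleton faithfully: the self-bounding inequality $(\partial_1\ell(t,y))^2\le 2H\,\ell(t,y)$ for smooth non-negative losses (your constant and the resulting $\sqrt{H/2}$-Lipschitzness of $\sqrt{\ell}$ are right), the Bernstein-type relation $\mathbb{E}[\ell_h^2]\le B\,\Lbar(h)$ coming from non-negativity and boundedness, the local Rademacher complexity machinery with a sub-root fixed point $r^\star$, and the key estimate that the localized loss-class complexity scales like $\sqrt{Hr}\,\gR_n(\gH)$ up to polylogs, whose fixed point gives the $H\log^3 n\,\gR_n^2(\gH)$ fast-rate term in the realizable case.

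The one genuine gap is in how you propose to prove that key localized estimate. Writing $\ell_h=g_h\cdot g_h$, ``bounding one factor by its empirical $\ell_2$-norm $O(\sqrt r)$ and contracting through the other'' is not a step that goes through as stated: Talagrand contraction requires a Lipschitz constant governed by the sup-norm of the factor being frozen, not its empirical $L_2$-norm, so this product trick cannot by itself convert the localization radius $\sqrt r$ into a multiplicative factor. In \citet{srebro2010smoothness} this lemma is instead proved through covering numbers: smoothness is used to bound the empirical $L_2$ covers of the localized loss class by covers of $\gH$ at a rescaled scale, a Dudley entropy integral truncated at radius $\sqrt r$ supplies the $\sqrt r$ factor, and the covers of $\gH$ are converted back to $\gR_n(\gH)$ via fat-shattering bounds. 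That chain of conversions---not the dyadic peeling over scales of $r$, which is part of the standard fixed-point argument---is the actual source of the $\log^{3/2} n$ (hence $\log^3 n$ after squaring) overhead. Also note that the bound is stated with the data-dependent $\gR_n(\gH)$, so a step relating it to its expectation (or a direct empirical-complexity version of the localization theorem) is needed; your parenthetical acknowledges this but it should be folded into the concentration step rather than the peeling.
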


One technical difficulty is that \Cref{thm:fast_rate_rademacher_complexity} only works for bounded loss functions, but the $\ell_2$ loss on Gaussian data is unbounded. To circumvent this issue, we construct a smoothly truncated variant of $\ell_2$ loss~\eqref{eq:truncated_smooth_l2_loss} and apply \Cref{thm:fast_rate_rademacher_complexity} on that. Finally, we show that with a carefully chosen threshold, this truncation happens very rarely and, thus, does not change the population loss significantly.  The proof can be found in \Cref{sec:omitted_proofs}.

\section{Result for the Single Measurement Case}\label{sec:singlemeasurement}
Quite surprisingly, even though in the general case we cannot compute the closed-form of the induced regularizer in~\eqref{eq:Fdefinition}, we can find its minimum as a quasinorm function of the $E(\mW)$ which only depends on the singular values of $E(\mW)$. This yields the following result for multiple layers $L$ (possibly $L > 2$) with a single measurement. 

%

\begin{theorem}\label{thm:induced_regularizer_single_measurement}
    Suppose there is only a single measurement matrix $A$, \emph{i.e.}, $n=1$. For any $M\in\mathbb{R}^{d_L\times d_0}$, the following holds:
    \begin{align}
        F(M) = \min_{W_L\dots W_1 = M} \tr[\nabla^2 \gL](\mW)= L\snorm{\Big(A^T M\Big)^{L-1}A^T}{2/L}^{2/L}.\label{eq:singlemeasurementcase}
    \end{align}
\end{theorem}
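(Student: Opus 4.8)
The plan is to reduce the claim to a pure matrix-factorization optimization and then prove matching lower and upper bounds. By \eqref{eq:F_jacobian} with $n=1$,
\[
F(M)=\min_{W_L\cdots W_1=M}\ \sum_{j=1}^L\|X_j\|_F^2,\qquad X_j:=(W_{j-1}\cdots W_1)\,A^\top\,(W_L\cdots W_{j+1})\in\mathbb{R}^{d_{j-1}\times d_j},
\]
with the conventions $X_1=A^\top W_L\cdots W_2$, $X_L=W_{L-1}\cdots W_1A^\top$. The algebraic key is the telescoping identity $X_1X_2\cdots X_L=(A^\top M)^{L-1}A^\top=:C$, valid because the overlap of consecutive blocks is $(W_L\cdots W_{j+1})(W_j\cdots W_1)=M$ and the two ends are $P_1=I$, $S_L=I$. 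It will also matter that $C=A^\top(MA^\top)^{L-1}$, so $\mathrm{col}(C)\subseteq\mathrm{col}(A^\top)$ and $\mathrm{row}(C)\subseteq\mathrm{col}(A)$; in particular the singular vectors of $C$ lie in the ranges of $A^\top$ and of $A$.

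For the lower bound I would combine AM-GM with a Hölder-type inequality for Schatten (quasi-)norms of products, $\snorm{Y_1\cdots Y_L}{2/L}\le\prod_j\|Y_j\|_F$ (the $L$-fold extension of \Cref{lem:matrixfrobenius}, obtained by induction with the two-factor step $\snorm{YZ}{2/\ell}\le\snorm{Y}{2/(\ell-1)}\|Z\|_F$), to get
\[
\sum_{j=1}^L\|X_j\|_F^2\ \ge\ L\Big(\prod_{j=1}^L\|X_j\|_F^2\Big)^{1/L}\ \ge\ L\,\snorm{X_1\cdots X_L}{2/L}^{2/L}\ =\ L\,\snorm{C}{2/L}^{2/L},
\]
so $F(M)\ge L\,\snorm{(A^\top M)^{L-1}A^\top}{2/L}^{2/L}$. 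Equality in this chain is achieved precisely when, writing $C=R_0\Sigma R_L^\top$ for an SVD of $C$ (with $\Sigma$ its singular values), each $X_j=R_{j-1}\Sigma^{1/L}R_j^\top$ for some matrices $R_1,\dots,R_{L-1}$ with orthonormal columns — which then forces all $\|X_j\|_F^2=\snorm{C}{2/L}^{2/L}$. Such $R_j$ exist because $d_i\ge\min(d_0,d_L)\ge\rank C$.

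For the matching upper bound I would exhibit a factorization of $M$ attaining this equality case. Since $\mathrm{col}(R_0)\subseteq\mathrm{col}(A^\top)$ and $\mathrm{col}(R_L)\subseteq\mathrm{col}(A)$, one can pull $R_0$ and $R_L$ back through $A$ using its pseudoinverse; this pins down the part of each $W_i$ that is not annihilated by the sandwiched $A^\top$ and makes each $X_j$ equal to $R_{j-1}\Sigma^{1/L}R_j^\top$, so $\sum_j\|X_j\|_F^2=L\snorm{C}{2/L}^{2/L}$. The extra freedom needed to enforce $W_L\cdots W_1=M$ exactly comes from the observation that altering $W_1$ by a matrix whose row space lies in $\ker A$, or $W_L$ by a matrix whose column space lies in $\ker A^\top=\mathrm{col}(A)^\perp$, changes none of the $X_j$ (each such perturbation is killed by the $A^\top$ inside every $X_j$); using this, together with the slack in the hidden widths, one absorbs the component of $M$ outside the row and column spans of $A$. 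For $L=2$ this recovers the depth-two formula of \Cref{thm:induced_regularizer_depth_2}.

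I expect the last construction to be the main obstacle: reconciling ``the $X_j$ are the balanced aligned factors of $C$'' with ``the $W_i$ multiply to $M$'' is delicate when $\rank A\ll\rank M$, since then $C$ has far smaller rank than $M$ and the corrections to $W_1,W_L$ must carry a large part of $M$. A convenient way to organize it is to use the SVD $A=U_A\Sigma_A V_A^\top$ and the $k\times k$ core $\mu:=U_A^\top M V_A$ (with $k=\rank A$): one checks $C=V_A(\Sigma_A\mu)^{L-1}\Sigma_A U_A^\top$, so $C$ depends on $M$ only through $(\Sigma_A,\mu)$, which reduces the construction to a square problem built from $U_A,V_A,\Sigma_A$ and the spectral data of $\Sigma_A^{1/2}\mu\Sigma_A^{1/2}$ — analogous to the transparent case $A=I$, where the $W_i$ can be taken as equal (conjugated) $L$-th roots of $M$ — after which the ``leak'' $M-U_AU_A^\top M V_AV_A^\top$ is attached to $W_1$ and $W_L$ as above. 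A minor technical point is to confirm the Schatten-Hölder inequality above in the quasi-norm regime $2/L<1$ (\Cref{defi:schattern_p}); this follows from weak log-majorization of the singular values of a product together with standard majorization inequalities applied to $t\mapsto t^{2/L}$.
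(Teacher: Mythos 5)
Your route is genuinely different from the paper's. The paper proves this theorem by a variational argument: starting from an (assumed) minimizer $\mW$, it perturbs $W_i\mapsto \exp(tC)W_i$, $W_{i+1}\mapsto W_{i+1}\exp(tC)^{-1}$ — which preserves $E(\mW)$ and touches only two summands — and extracts the first-order condition $\widetilde W_i^\top\widetilde W_i=\widetilde W_{i+1}\widetilde W_{i+1}^\top$ for the matrices $\widetilde W_j$ that you call $X_j$. This balancedness forces all $X_j$ to share singular values $\Lambda$ and, via the same telescoping identity $X_1\cdots X_L=(A^\top M)^{L-1}A^\top$ that you use, gives $\Lambda^L=\Sigma$ and hence the value $L\|\Lambda\|_F^2=L\snorm{(A^\top M)^{L-1}A^\top}{2/L}^{2/L}$ at the minimizer. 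Your argument instead transplants the strategy of the paper's proof of \Cref{lem:Rclosedform} (AM--GM plus a product inequality, then an explicit equality construction) to the single-measurement setting, upgrading \Cref{lem:matrixfrobenius} to the $L$-fold Schatten--Hölder inequality $\snorm{Y_1\cdots Y_L}{2/L}\le\prod_j\|Y_j\|_F$. What this buys you is a lower bound that holds for \emph{every} feasible $\mW$, not just at critical points, and that does not presuppose that the minimum is attained; what the paper's argument buys is that it never has to exhibit an explicit optimal factorization, since the balance conditions pin down the objective value wherever a minimizer exists. Your quasi-norm Hölder step is correct and provable exactly as you say (Horn's weak log-majorization of singular values of products, followed by $t\mapsto t^{2/L}$ and the scalar Hölder inequality with exponents summing to one).

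The one place where your proposal is not yet a proof is the achievability construction, and you have correctly identified it as the hard part. Two comments. First, your reduction to the core $\mu=U_A^\top MV_A$ and the observation that perturbations of $W_1$ with rows in $\ker A$ (resp.\ of $W_L$ with columns in $\ker A^\top$) leave every $X_j$ unchanged are the right ingredients; carrying the ``leak'' $M-U_AU_A^\top MV_AV_A^\top$ on $W_1$ and $W_L$ can be made to work whenever the balanced factors $X_j$ have full enough rank to absorb the off-range blocks of $M$ (the widths $d_i\ge\min(d_0,d_L)\ge\rank M$ give the needed room). Second, be aware that in genuinely degenerate cases the minimum is only an infimum: for $L=2$, $A=e_1e_1^\top$ and $M$ with $M_{11}=0$ but $M_{1:}\ne 0$, the claimed value is $0$ yet no feasible $\mW$ attains it, since $\|A^\top W_2\|_F=0$ forces the first row of $W_2W_1$ to vanish. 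This is not a defect specific to your approach — the paper's proof silently assumes a minimizer exists — but your construction should either restrict to the non-degenerate case or produce an approximating sequence rather than an exact minimizer.
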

To better illustrate the behavior of this induced regularizer, consider the case where the measurement matrix $A$ is identity and $M$ is symmetric with eigenvalues $\{\sigma_i\}_{i=1}^d$. Then, it is easy to see that $F(M)$ in~\eqref{eq:singlemeasurementcase} is equal to $F(M) = \sum_i \sigma_i^{2(L-1)/L}$. Interestingly, we see that the value of $F(M)$ converges to the Frobenius norm of $M$ and not the nuclear norm as $L$ becomes large, which behaves quite differently (e.g. in the context of sparse recovery). This means that beyond RIP, the induced regularizer can behave very differently, and perhaps the success of training deep networks with SGD is closely tied to the properties of the dataset. 

\section{Experiments}
In this section, we examine our theoretical results with controlled experiments via synthetic data. The experiments are based on mini-batch SGD and label noise SGD~\citep{blanc2019implicit}. Both use  the standard update rule $\mW_{t+1} = \mW_t - \eta \nabla \gL_t(\mW_t)$, but with different objectives:
\begin{itemize}
	\item Mini-batch loss: $ \gL_t^{\textup{mini-batch}}(\mW) = \frac{1}{B}\sum_{i\in \mathcal B_t} (f_i(\mW) - b_i)^2$;
	\item Label-noise loss: $\gL_t^{\textup{label-noise}(\mW)} = \frac{1}{B}\sum_{i\in \mathcal B_t} (f_i(\mW) - b_i + \xi_{t,i})^2$,
\end{itemize}
where $\mathcal B_t$ is the batch of size $B$ independently sampled with replacement at step $t$ and  $\xi_{t}\in\mathbb{R}^d$ are i.i.d. multivariate zero-mean Gaussian random variables with unit variance.

It is known that with a small learning rate, label noise SGD implicitly minimizes the trace of Hessian of the loss, after reaching zero loss~\citep{damian2021label,li2021happens}. In particular,~\citet{li2021happens} show that after reaching zero loss, in the limit of step size going to zero, label noise SGD converges to a gradient flow according to the negative gradient of the trace of Hessian of the loss. As a result, we expect label noise SGD to be biased to regions with smaller trace of Hessian. We also compare the label noise SGD with vanilla SGD without label noise as a baseline, which can potentially find a solution with large sharpness when the learning rate is small. Note this is not contradictory with the common belief that mini-batch SGD prefers flat minimizers and thus benefits generalization~\citep{keskar2016large,jastrzkebski2017three}. For example, assuming the convergence of mini-batch SGD, \citep{wu2018sgd} shows that the solution found by SGD must have a small sharpness, bounded by a certain function of the learning rate. However, there is no guarantee when the learning rate is small and the upper bound of sharpness becomes vacuous.

\begin{figure*}[t]
\begin{center}   
\includegraphics[width=\linewidth]{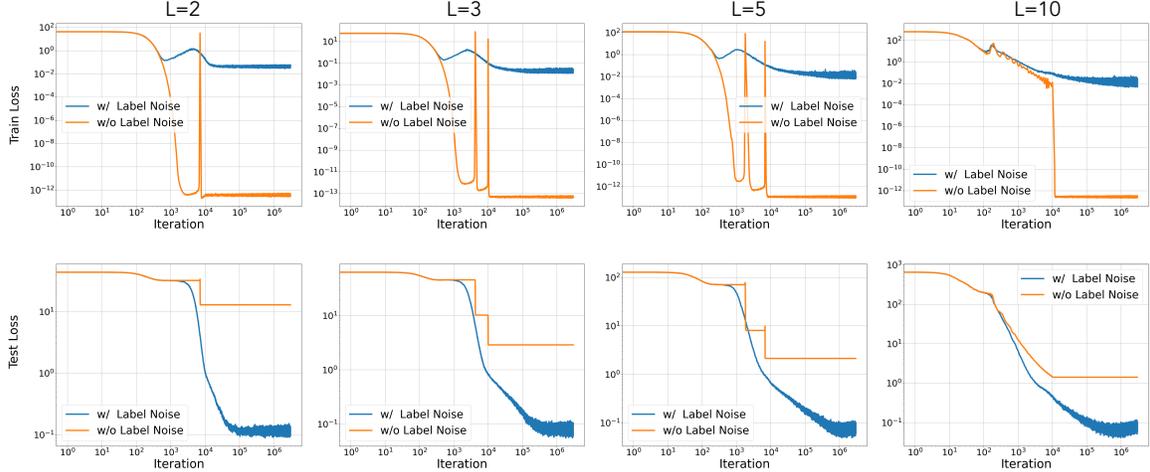}
\end{center}
\vspace{-2mm}
\caption{\textbf{Train and test loss.} Label noise SGD leads to better generalization results due to the sharpness-minimization implicit biases (as shown in Figure \ref{fig_2}), while mini-batch SGD without label noise finds solutions with much larger test loss.} \label{fig_1}
\vspace{-0mm}
\end{figure*}

\begin{figure*}[t]
\begin{center}   
\includegraphics[width=\linewidth]{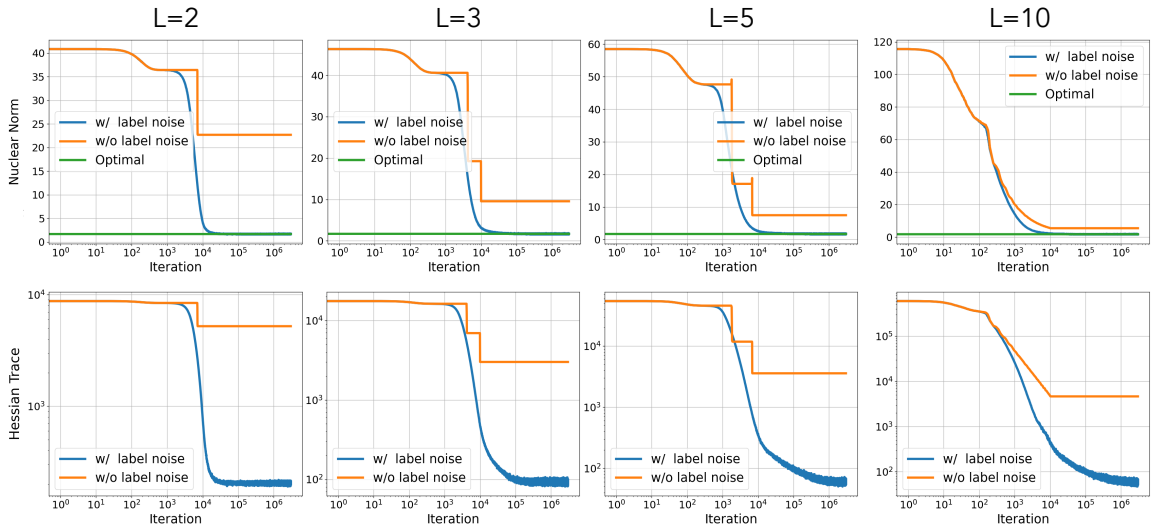}
\end{center}
\vspace{-2mm}
\caption{\textbf{Trace of Hessian and Nuclear Norm.} Label noise SGD  recovers the min nuclear norm solution via its sharpness-minimization implicit regularization and thus leads to better generalization (see \Cref{fig_1}).} \label{fig_2}
\vspace{-4mm}
\end{figure*}

In our synthetic experiments, we sample $n=600$ input matrices $\{A_i\}_{i=1}^n$, where $A_i \in \sR^{d \times d}$ with $d=60$. Each entry $A_i^{(j,k)}$ is i.i.d. sampled from normal distribution $\gN(0,1)$. The ground truth matrix $M^\ast$ is constructed by $M^\ast = M_1M_2 / d$, where $M_1 \in \sR^{d \times r}$ and $M_2 \in \sR^{r \times d}$ and $r$ is the rank of $M^\ast$. The entries in $M_1$ and $M_2$ are again i.i.d. sampled from $\gN(0,1)$ and the rank $r$ is set to $3$. The corresponding label is therefore computed via $b_i = \langle A_i, M^\ast \rangle$. The parameters $(W_1, . . . , W_L)$ are sampled from a zero-mean normal distribution for depth $L=2,3,5,$ and $10$. For label noise SGD, we optimize the parameter via SGD with label noise drawn from $\gN(0, 1)$ and batch size $50$. The learning rate is set to $0.01$.

We examine our theory by plotting the training and testing loss along with the nuclear norm and the trace of Hessian of the label noise SGD solutions in  \Cref{fig_1,fig_2}. As the figure illustrates, the trace of the Hessian exhibits a gradual decrement, eventually reaching a state of convergence over the course of the training process. This phenomenon co-occurs with the decreasing of the nuclear norm of the end-to-end matrix. In particular, we further plot the nuclear norm of the min nuclear norm solution obtained via solving convex optimization in \Cref{fig_2} and demonstrate that label noise SGD converges to the minimal nuclear norm solution, as predicted by our theorem~\Cref{thm:RIPbias}. As a consequence of this sharpness-minimization implicit bias, the test loss decreases drastically.

Interestingly, there are a few large spikes in the training loss curve of  mini-batch SGD without label noise even after the training loss becomes as small as $10^{-12}$ and its generalization improves immediately after recovering from the spike. Meanwhile, the trace of hessian and the nuclear decrease during this process. We do not have a complete explanation for such spikes. One possible  explanation from the literature~\citep{ma2018implicit} is that the loss landscape around the minimizers is too sharp and thus mini-batch SGD is not linear stable around the minimizer, so it escapes eventually. However, this explanation does not explain why minibatch SGD can find a flatter minimizer each time after escaping and re-converging.

\section{Conclusion and Future Directions}\label{sec:conclusion}



In this paper, we study the inductive bias of the minimum trace of the Hessian solutions for learning deep linear networks from linear measurements. We show that trace of Hessian regularization of loss on the end-to-end matrix of deep linear networks roughly corresponds to nuclear norm regularization under restricted isometry property (RIP) and yields a way to recover the ground truth matrix. Furthermore, leveraging this connection with the nuclear norm regularization, we show a generalization bound which yields a faster rate than Frobenius (or $\ell_2$ norm) regularizer for Gaussian distributions. Finally, going beyond RIP conditions, we obtain closed-form solutions for the case of a single measurement. Several avenues for future work remain open, e.g., more general characterization of trace of Hessian regularization beyond RIP settings and understanding it for neural networks with non-linear activations.


\section*{Acknowledgement}
TM and ZL would like to thank the support from NSF IIS 2045685.

\bibliographystyle{plainnat}
\bibliography{references,all}

\newpage
\appendix

\section{Proof of Lemma~\ref{lem:ripconcentration}}
\begin{proof}[Proof]
For a fixed $j \in \{2,\dots, L-1\}$ and vectors $x \in \mathbb R^{d_0}$ and $y \in \mathbb R^{d_L}$ we apply the RIP property in~\Cref{defi:rip} for the rank one matrix $X = xy^T$. As a result we get
\begin{align*}
    (1-\delta)\|xy^T\|_F^2 \leq \frac{1}{n}\sum_{i=1}^n \langle A_i, xy^T\rangle^2 \leq (1+\delta)\|xy^T\|_F^2,
\end{align*}
or equivalently
\begin{align}
    (1-\delta)\|x\|^2\|y\|^2 \leq \frac{1}{n}\sum_{i=1}^n (x^TA_iy)^2 \leq (1+\delta)\|x\|^2\|y\|^2.\label{eq:RIPvectors}
\end{align}

Now for arbitrary indices $1\leq \ell \leq d_{j-1}$ and $1 \leq k \leq d_j$, we pick $x,y$ in Equation~\eqref{eq:RIPvectors} equal to the $\ell$th row of the matrix $W_{j-1}\dots W_1$ and the $k$th column of the matrix $W_L\dots W_{j+1}$:
\begin{align*}
    &(1-\delta)\|(W_{j-1}\dots W_1)_{\ell:}\|^2\|(W_L\dots W_{j+1})_{:k}\|^2 \\
    &\leq \frac{1}{n}\sum_{i=1}^n ((W_{j-1}\dots W_1)_{\ell:}A_i(W_L\dots W_{j+1})_{:k})^2 \\
    &\leq (1+\delta)\|W_{j-1}\dots W_1)_{\ell:}\|^2\|(W_L\dots W_{j+1})_{:k}\|^2.\numberthis\label{eq:RIPentrywise}
\end{align*}
Summing this over all $\ell,k$, we obtain that the sum of Frobenius norm of matrices $W_{j-1}\dots W_1 A_i W_L \dots W_{j+1}$ concentrate around $\|W_{j-1}\dots W_1\|_F^2 \|W_L\dots W_{j+1}\|_F^2$. 
\begin{align*}
    &(1-\delta)\|W_{j-1}\dots W_1\|_F^2\|W_L\dots W_{j+1}\|_F^2 \\
    &\leq \frac{1}{n}\sum_{i=1}^n \|W_{j-1}\dots W_1A_iW_L\dots W_{j+1}\|_F^2 \\
    &\leq (1+\delta)\|W_{j-1}\dots W_1\|^2\|W_L\dots W_{j+1}\|^2.\numberthis\label{eq:frobnorm}
\end{align*}
For $j=1$, we apply Equation~\eqref{eq:RIPvectors} with $x=(W_{j-1}\dots W_1)_{\ell:}$ and $y=e_k$, where $e_k$ is the $k$th standard vector:
\begin{align*}
    (1-\delta)\|(W_{L-1}\dots W_1)_{\ell:}\|^2 \leq \frac{1}{n}\sum_{i=1}^n ((W_{L-1}\dots W_1)_{\ell:} A_i e_k)^2 \leq (1+\delta)\|(W_{L-1}\dots W_1)_{\ell:}\|^2.
\end{align*}
Summing this for all $k,\ell$
\begin{align}
    (1-\delta)d_0\|W_{L-1}\dots W_1\|_F^2 \leq \frac{1}{n}\sum_{i=1}^n \|W_{L-1}\dots W_1 A_i\|_F^2 \leq (1+\delta)d_0\|W_{L-1}\dots W_1\|_F^2.\label{eq:jonecase}
\end{align}
Similarly for $j = L$, 
\begin{align}
    (1-\delta)d_L\|W_{L}\dots W_2\|_F^2 \leq \frac{1}{n}\sum_{i=1}^n \|A_i W_{L}\dots W_2\|_F^2 \leq (1+\delta)d_L\|W_{L}\dots W_2\|_F^2.\label{eq:jLcase}
\end{align}
Combining Equations~\eqref{eq:frobnorm},~\eqref{eq:jonecase}, and~\eqref{eq:jLcase} 
\begin{align*}
    (1-\delta)R(W) \leq \textup{tr}(\nabla^2 L)(W) \leq (1+\delta)R(W).
\end{align*}
\end{proof}

\section{Proof of \Cref{thm:induced_regularizer_single_measurement}}
\begin{proof}[Proof of \Cref{thm:induced_regularizer_single_measurement}]
Recall that we hope to characterize the solution with a minimal trace of hessian given that the end-to-end matrix $E(\mW) = W_L\cdots W_1$ is equal to some fixed matrix $M$, namely,
\begin{align*}
    &\min_{E(\mW)=M} \sum_{i=1}^L \|W_{i-1}\dots W_1 A^T W_L\dots W_{i+1}\|_F^2.
\end{align*}
Let $\mW$ be any minimizer of the above objective. For arbitrary matrix $C \in {\mathbb R}^{d_i \times d_i}$, define
\begin{align*}
    U(t) = \exp(tC) \triangleq \sum_{i=0}^\infty \frac{(tC)^i}{i!},
\end{align*}
For any $i$, we multiply $W_i$ from left by $U(t)$ and multiply $W_{i+1}$ by $U(t)^{-1}$ from right, 
\begin{align*}
    &W_i(t) \leftarrow U(t) W_i,\\
    &W_{i+1}(t) \leftarrow W_{i+1}U(t)^{-1}.
\end{align*}
For convenience, below we drop the dependence of $W_i(t),W_{i+1}(t)$ over $t$, that is, only $W_i$ and $W_{i+1}$ are implicitly functions of $t$, while the rest $W_j$ are independent of $t$. Then, note that for any $j\le i-1$ we have
\begin{align*}
    W_{j-1}\dots W_1 A^T W_L\dots W_{i+1}U(t)^{-1}U(t)W_{i} \dots W_{j+1}= W_{j-1}\dots W_1 A^T W_L \dots W_{j+1},
\end{align*}
and for $j\ge i+2$:
\begin{align*}
    W_{j-1}\dots W_{i+1}{U(t)}^{-1}U(t)W_{i}\dots W_1A^T W_L\dots W_{j+1} = W_{j-1}\dots W_{i+1}W_{i}\dots W_1A^T W_L\dots W_{j+1}.
\end{align*}
So the only terms that actually change as a function of $t$ correspond to $j = i$,
\begin{align}
    \|W_{i-1}\dots W_1 A^T W_L\dots W_{i+1}\|_F^2 
    =\tr(W_{i-1}\dots W_1A^TW_L \dots W_{i+1} {W_{i+1}}^T\dots W_L^T A {W_1}^T\dots {W_{i-1}}^T),\label{eq:firstt}
\end{align}
and to $j=i+1$,
\begin{align}
    \|W_{i}\dots W_1 A^T W_L\dots W_{i+2}\|_F^2 = \tr(W_{i}\dots W_1 A^T W_L\dots W_{i+2}{W_{i+2}}^T\dots {W_L}^T A {W_1}^T \dots {W_{i}}^T).\label{eq:secondd}
\end{align}
Now taking derivative of $U(t)$ with respect to $t$, 
\begin{align*}
    U'(0) = C.
\end{align*}
Now for every $j \in \{1,\dots,L\}$ we define
\begin{align*}
    \widetilde W_j = W_{j-1}\dots W_1 A^T W_L\dots W_{j+1},
\end{align*}
where we use $W_{i-1}\dots W_1$ and $W_L\dots W_{i+1}$ to denote identity for $i = 1$ and $i=L$ respectively.

Then, if we take derivative from the terms~\eqref{eq:firstt} and~\eqref{eq:secondd} with respect to $t$:
\begin{align}
    &\frac{d}{dt} \|W_{i-1}\dots W_1 A^T W_L\dots W_{i+1}\|_F^2\Big|_{t=0}\notag\\
    &=
    -\tr((C + C^T){W_{i+1}}^T\dots {W_L}^TA{W_1}^T\dots {W_{i-1}}^T W_{i-1}\dots W_1A^T W_L\dots W_{i+1}),\\
    &=\tr((C+C^T){\widetilde W_i}^T {\widetilde W_i}).\notag
\end{align}
and
\begin{align}
    &\frac{d}{dt} \|W_{i}\dots W_1 A^T W_L\dots W_{i+2}\|_F^2\Big|_{t=0}\notag\\
    &=-\tr((C+C^T){\widetilde W_{i+1}}{\widetilde W_{i+1}}^T)
\end{align}
Now from the optimality of $\mW$, the following equality holds for every matrix $C\in\mathbb{R}^{d\times d}$:
\begin{align}
    \frac{\diff}{\diff t} \textup{tr}[\nabla^2 \gL(\mW(t))]\Big|_{t=0} = \tr((C+C^T)({\widetilde W_i}^T {\widetilde W_i} - {\widetilde W_{i+1}}{\widetilde W_{i+1}}^T)) = 0.\label{eq:corekkt}
\end{align}
Now since $C$ is arbitrary and the matrices 
${\widetilde W_i}^T \widetilde W_i$ and $\widetilde W_{i+1}{\widetilde W_{i+1}}^T$ are symmetric, we must have
\begin{align}\label{eq:i_i+1_simultaneous_diagonalizable}
    {\widetilde W_i}^T {\widetilde W_i} = {\widetilde W_{i+1}}{\widetilde W_{i+1}}^T.
\end{align}
\Cref{eq:i_i+1_simultaneous_diagonalizable} implies that all $\widetilde W_{i}$ for $1\le i \le L$ have the same set of singular values. Moreover, there exists matrices $\{U_i\}_{i=0}^{L}$ where the columns of each matrix are orthogonal, such that for each $1\le i\le L$,
\begin{align}
    \widetilde{W_i} = W_{i-1}\dots W_1A^TW_L \dots W_{i+1} = U_{i-1} \Lambda {U_{i}}^T.\label{eq:symmetric2}
\end{align}
Multiplying Equation~\eqref{eq:i_i+1_simultaneous_diagonalizable} for all $1 \leq i \leq L$ (in the case $i=1$ we take $W_1\dots W_{i-1}$ as identity), we get
\begin{align}
    \Big(A^T E(\mathbf W)\Big)^{L-1}A^T = \Big(A^T W_L\dots W_1\Big)^{L-1}A^T = U_{0}\Lambda^L {U_L}^T,\label{eq:keyeqnonsymmetric}
\end{align}
or in case where $A$ is positive semi-definite,
\begin{align}
    A^{1/2}\Big(A^{1/2}E(\mathbf W) A^{1/2}\Big)^{L-1}A^{1/2} = U_{0}\Lambda^L {U_L}^T.\label{eq:keyeqsymmetric}
\end{align}

But having access to Equations~\eqref{eq:symmetric2}, we can write $\textup{tr}[\nabla^2 \gL(\mW)]$ at the minimizer point $\mW = (W_1, \dots, W_L)$ as
\begin{align*}
    \sum_{i=1}^L \|W_{i-1}\dots W_1 A W_L\dots W_{i+1}\|_F^2 = L\|\Lambda\|_F^2 = L\|\Lambda^L\|_{S_{2/L}}^{2/L}.
\end{align*}
which based on Equation~\eqref{eq:keyeqnonsymmetric} is equal to
\begin{align*}
    L\snorm{\Big(A^T E(\mathbf W)\Big)^{L-1}A^T}{2/L}^{2/L},
\end{align*}
or in the symmetric case is equal to
\begin{align*}
    L\snorm{A^{1/2}\Big(A^{1/2}E(\mW) A^{1/2}\Big)^{L-1}A^{1/2}}{2/L}^{2/L}.
\end{align*}
This is the induced regularizer of the trace of Hessian over all interpolating solutions for linear network with depth $L$ in the space of end-to-end matrices.

\end{proof}
\section{Other Omitted Proofs}\label{sec:omitted_proofs}
\subsection{Proof of \Cref{thm:generalizationfastrate}}
\begin{proof}[Proof of \Cref{thm:generalizationfastrate}]
    By \Cref{cor:high_prob_min_trace_of_hessian}, we know that  with probability at least $1-\exp(\Omega(n))$,
    \begin{align*}
        \|E(\mW^*)\|_* \leq 3\|M^*\|_*. 
    \end{align*}
	Note by assumption, $n=\Omega(d_0+d_L)$. Thus it suffices to show that with probability at least $1-\exp(\Omega(d_0+d_L))$, for all interpolating solutions in $\gH_{3\|M^*\| _*} = \{h_M\mid \norm{M}_*\le 3\norm{M^*}_*\}$, \Cref{eq:fast_rate} holds. 

    Recall $\Lbar \big(E(\mW)\big)$ is the population square loss at the end-to-end matrix $E(\mW)\in \mathbb R^{d_0\times d_L}$. Namely, 
    \begin{align*}
        \mathcal \Lbar(E(\mW)) \triangleq \mathbb E_{A }(\inner{ A}{ E(\mW^*) } - \inner{ A}{ M^* }) ^2 = \mathbb E_{A} \inner{A}{E(\mW)-M^*}^2 = \norm{E(\mW^*) - M^*}_F^2.
    \end{align*}

    First, we bound the population Rademacher complexity of function classs $\mathcal \gH_{3\norm{M^*}_*}$. Its empirical Rademacher complexity on $\{A_i\}_{i=1}^n$is 
    \begin{align*}
        \mathcal R_n(\gH_{3\norm{M^*}_*}) &=\frac{1}{n}\mathbb E_{\epsilon \sim \{\pm 1\}^n} \sup_{h\in \gH_{3\norm{M^*}_*}}\sum_{i=1}^n  \epsilon_i h(A_i)\\ 
        &= \frac{1}{n}\mathbb
         E_{\epsilon \sim \{\pm 1\}^n} \sup_{M: \|M\|_*
        \leq 3\|M^*\|_*}\sum_{i=1}^n \langle \epsilon_i A_i, M\rangle = 3/n \cdot\|M^*\|_*\|\sum_{i=1}^n \epsilon_i A_i\|_{2}.
    \end{align*}
     Note that the matrix $A_{sum} = \sum_{i=1}^n \epsilon_i A_i$ itself is an iid Gaussian matrix where each entry is sampled from $\mathcal N(0,n)$. Hence, from Proposition 2.4 in~\cite{rudelson2010non}, we have the following tail bound on the spectral norm of $A_{sum}$
    \begin{align}
        \mathbb P(\|A_{sum}\|_2 \geq c_1\sqrt n (\sqrt{d_0} + \sqrt{d_L}) + \sqrt n t) \leq 2 e^{-c_2 t^2},\label{eq:tailbound}
    \end{align}
    This implies $\mathbb E \|A_{sum}\|_2 = O\big(\sqrt n(\sqrt{d_0} + \sqrt{d_L})\big)$,
    which in turn bounds the Rademacher complexity
    \begin{align}
        \overline\gR_n(\gH_{3\norm{M^*}_*}) = \Exp \gR_n(\gH_{3\norm{M^*}_*}) =O\left( \frac{\sqrt{d_0} + \sqrt{d_L}}{\sqrt n}\norm{M^*}_*\right).\label{eq:rademacher}
    \end{align}
    
    Note that the Gaussian distribution $\gG_{d_L\times d_0}$ is unbounded, which makes the value of the squared loss unbounded, while it is convenient to bound the generalization gap when the value of the loss is bounded. To cope with this fact, for a given threshold $c$, we define a truncated version of the loss denoted by $l_c(x,y) = \ell_c(x-y)$, plotted in Figure~\ref{fig:surrogate}, which is a smooth approximation of the squared loss. 
    \begin{align}\label{eq:truncated_smooth_l2_loss}
    l_c(x,y) = \ell_c(x-y) =
    \begin{cases}
          (x-y)^2,  & \textup{if }x-y \in [-c, c],\\
           -(x-y)^2 + 4c|x-y| - 2c^2, & \textup{if } x-y \in [-2c,-c]\cup [c,2c],\\
          2c^2, & \textup{if } x-y\in (-\infty, -2c]\cup [2c, \infty).
    \end{cases}
    \end{align}
     It is easy to verify $\partial_x\ell_c$ is $2$-lipschitz in $x$.
    Also it is clear that $l_c(x,y)\le \max(2c^2,l(x,y))$ for all $x,y$ and $l_c(x,y)< l(x,y)$  only when $|x-y|>c$. Next, we define the $c$-cap population loss $\Lbar_c$ with respect to $\ell_c$:
    \begin{align*}
        \Lbar_c(M) = \mathbb E_{A\sim \gG_{d_L\times d_0}} \ell_c(\inner{A}{M}, \inner{A}{M^*}).
    \end{align*}
Thus we have 
    \begin{align*}
        \Lbar(M) - \Lbar_c(M) &\leq
        \mathbb E \ind_{|\inner{A}{M-M^*}|\ge c}\inner{A}{M-M^*}^2.
    \end{align*}
    But note that the variable $\inner{A}{M-M^*}$ is a Gaussian variable with variance $\|M - M^*\|^2_F \leq \|M - M^*\|_*^2$. Hence, from Lemma~\ref{lem:forthmomentbound}, picking $c = \Theta(\log(n)\|M^*\|_*)$, for all $M\in\gH_{3\norm{M^*}_*}$, 
    \begin{align*}
        0\le \Lbar(M) - \Lbar_c(M) \le   O(\frac{\|M^*\|_*^2 \log n }{n}).
    \end{align*}    
    
    Now using the Rademacher complexity bound in~\eqref{eq:rademacher} and applying  \Cref{thm:fast_rate_rademacher_complexity}, we have for all interpolating solutions $M\in\gH_{3\norm{M^*}_*}$ with probability at least $1-\exp(\Omega(d_0+d_L))$:
    \begin{align*}
        \Lbar_c(M)  &\leq O\left( H \log^3(n) \mathcal R_n^2 +\frac{c^2(d_0+d_L)}{n}\right) \\
        &\leq O\left(\norm{M^*}_*^2 \frac{(d_0+d_L)\log^3 n}{n}\right)
        \numberthis\label{eq:upperbound1}
    \end{align*}
    
    where $H$ is the gradient smoothness of the loss $\ell_c$ which is $2$ and $\mathcal L_c$ is the empirical loss defined in~\eqref{eq:lmp} with square loss substituted by $\ell_c$. Above, we used the fact that $\ell_c$ is bounded by $2c^2$.
\begin{figure}[H]
\centering
\includegraphics[width=0.7\textwidth]{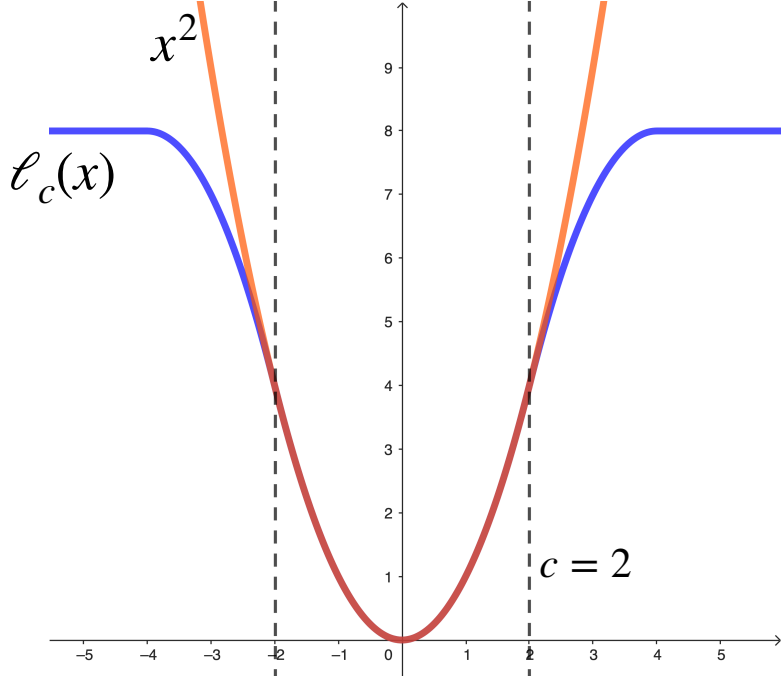}
\caption{The smooth surrogate loss $\ell_c$ as defined in Equation~\eqref{eq:truncated_smooth_l2_loss} with parameter $c=2$.}
\label{fig:surrogate}
\end{figure}
\end{proof}

\begin{lemma}\label[lemma]{lem:forthmomentbound}
    For standard Gaussian variable $X$, we have
    \begin{align*}
        \mathbb E \ind_{|X| \geq c} X^2 \leq e^{-c^2/2}\frac{2(c^2+2)}{c\sqrt{2\pi}}.
    \end{align*}
\end{lemma}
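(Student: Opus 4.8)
The plan is a direct computation using symmetry of the Gaussian density and a single integration by parts, followed by the standard Mills-ratio tail estimate. First I would write, by symmetry of the density $\varphi(x) = \tfrac{1}{\sqrt{2\pi}}e^{-x^2/2}$,
\begin{align*}
    \mathbb E\, \ind_{|X|\ge c} X^2 = \frac{2}{\sqrt{2\pi}}\int_c^\infty x^2 e^{-x^2/2}\,\diff x.
\end{align*}

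Next I would evaluate the integral by parts, taking $u = x$ and $\diff v = x e^{-x^2/2}\,\diff x$ so that $v = -e^{-x^2/2}$. This gives $\int_c^\infty x^2 e^{-x^2/2}\,\diff x = c e^{-c^2/2} + \int_c^\infty e^{-x^2/2}\,\diff x$. The remaining Gaussian tail integral I would control by the elementary bound $\int_c^\infty e^{-x^2/2}\,\diff x \le \int_c^\infty \tfrac{x}{c} e^{-x^2/2}\,\diff x = \tfrac{1}{c}e^{-c^2/2}$, valid since $x/c \ge 1$ on $[c,\infty)$. Combining the two contributions yields $\int_c^\infty x^2 e^{-x^2/2}\,\diff x \le e^{-c^2/2}\big(c + \tfrac1c\big) = e^{-c^2/2}\tfrac{c^2+1}{c} \le e^{-c^2/2}\tfrac{c^2+2}{c}$, and multiplying by $2/\sqrt{2\pi}$ gives exactly the claimed inequality.

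There is essentially no obstacle here: the only points requiring the slightest care are that the boundary term at $+\infty$ vanishes (since $xe^{-x^2/2}\to 0$) and that the final slack $c^2+1 \le c^2+2$ is harmless, which makes the stated constant a clean (if slightly loose) form of the sharp bound $\tfrac{2}{\sqrt{2\pi}}e^{-c^2/2}\tfrac{c^2+1}{c}$. If one preferred to avoid the tail estimate entirely, one could instead use the cruder bound $\int_c^\infty e^{-x^2/2}\diff x \le e^{-c^2/2}$ together with a factor-of-$2$ adjustment, but the Mills-ratio version above already lands inside the target constant, so I would keep it.
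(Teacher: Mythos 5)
Your proof is correct. The paper's own argument is a different (and slightly cruder) computation: it bounds $x^2 \le x^3/c$ over the whole range $[c,\infty)$, then substitutes $u=x^2$ to evaluate $\frac{1}{c}\int_{c^2}^\infty u\,e^{-u/2}\,\diff u = \frac{2}{c}e^{-c^2/2}(c^2+2)$ exactly, which produces the stated constant $\frac{2(c^2+2)}{c\sqrt{2\pi}}$ with equality in the last step. You instead integrate by parts first and only apply the $x/c\ge 1$ trick to the leftover Gaussian tail, which yields the sharper intermediate bound $\frac{2(c^2+1)}{c\sqrt{2\pi}}e^{-c^2/2}$ before relaxing $c^2+1$ to $c^2+2$ to match the statement. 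Both routes are elementary and equally rigorous; yours buys a marginally better constant at the cost of one extra integration by parts, while the paper's is a single substitution that lands exactly on the quoted expression. Either is fine for the downstream use in the proof of Theorem~\ref{thm:generalizationfastrate}, where only the order $e^{-c^2/2}\,\mathrm{poly}(c)$ matters.
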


\begin{proof}[Proof of \Cref{lem:forthmomentbound}]
    \begin{align*}
         \mathbb E\ind_{|X| \geq c} X^2 = & 2/\sqrt{2\pi} \int_{x=c}^\infty x^2 e^{-\frac{x^2}{2}}dx \\
        \le & 2/\sqrt{2\pi} \int_{x=c}^\infty \frac{x^3}{c} e^{-\frac{x^2}{2}}dx \\
        = & 1/(c\sqrt{2\pi}) \int_{x=c}^\infty x^2 e^{-\frac{x^2}{2}}dx^2 \\
        = & 1/(c\sqrt{2\pi}) \int_{x=c^2}^\infty x e^{-\frac{x}{2}}dx \\
        = & 1/(c\sqrt{2\pi}) (- 0 - (-2 e^{-c^2/2}(c^2+2)))\\
        = & e^{-c^2/2}\frac{2(c^2+2)}{c\sqrt{2\pi}}.
    \end{align*}
\end{proof}

\subsection{Proof of \Cref{lem:RIPrelax}}
\begin{proof}[Proof of \Cref{lem:RIPrelax} ]
    Consider its SVD decomposition of $M$, $ M = \sum_{i=1}^d \alpha_i u_i v_i^T$, where $\alpha_i$'s are the singular values and $\{u_i\}_{i=1}^d,\{v_i\}_{i=1}^d$ each is an orthonormal basis for $\mathbb{R}^d$.  
    We can write
    \begin{align*}
       \sum_{i=1}^n \langle A_i, M \rangle^2&= \frac{1}{n}\sum_{i=1}^n (\sum_{j=1}^d \alpha_j u_j^T A_i v_j)^2\\
        &=\frac{1}{n}\sum_{i=1}^n \sum_{j:k=1}^d \alpha_j \alpha_k  \tr(A_i v_j u_j^T)\tr(A_i v_k u_k^T)\\
        &=\sum_{j:k=1}^d \frac{1}{4n}\sum_{i=1}^n \alpha_j \alpha_k  \big(\tr(A_i (v_j u_j^T + v_k u_k^T))^2 - \tr(A_i (v_ju_j^T - v_k u_k^T))^2\big).
    \end{align*}
    But again using the $(2,\delta)$-RIP of $\{A_i\}_{i=1}^n$, 
    \begin{align*}
        (1 - \delta)\|v_j u_j^T + v_k u_k^T\|_F^2 \leq\frac{1}{4n}\sum_{i=1}^n  \tr(A_i (v_j u_j^T + v_k u_k^T))^2 \leq (1 + \delta)\|v_j u_j^T + v_k u_k^T\|_F^2\\
        (1 - \delta)\|v_j u_j^T - v_k u_k^T\|_F^2 \leq  \frac{1}{4n} \sum_{i=1}^n \tr(A_i (v_j u_j^T - v_k u_k^T))^2 \leq (1 + \delta)\|v_j u_j^T - v_k u_k^T\|_F^2.
    \end{align*}
    This implies
    \begin{align*}
        &\frac{1}{4n}\sum_{i=1}^n \big(\tr(A_i (v_j u_j^T + v_k u_k^T))^2 - \tr(A_i (v_ju_j^T - v_k u_k^T))^2\big) \\
        &\leq
        \frac{1}{2}\delta(\|v_j u_j^T\|_F + \|v_k u_k^T\|_F) + (1+\delta)\langle v_ju_j^T, v_ku_k^T\rangle.
    \end{align*}
    Summing this over $j:k$ and noting that $\langle v_ju_j^T, v_ku_k^T\rangle$ is zero for $j \neq k$:
    \begin{align*}
         \sum_{i=1}^n \langle A_i, M \rangle^2 &\leq (1+\delta)(\sum_{j=1}^d \alpha_j^2) + \delta(\sum_j |\alpha_j|)^2
         \leq (1+\delta)\|M\|_F^2 + \delta \|M\|_*^2.\numberthis\label{eq:avalii}
    \end{align*}
    Similarly we obtain
    \begin{align}
     \sum_{i=1}^n \langle A_i, M \rangle^2 \geq (1-\delta) \|M\|_F^2 - \delta \|M\|_*^2.\label{eq:dovomi}
    \end{align}
    Combining Equations~\eqref{eq:avalii} and~\eqref{eq:dovomi}:
    \begin{align}
        \Big| \sum_{i=1}^n \langle A_i, M \rangle^2  - \|M\|_F^2 \Big|\leq  \delta\|M\|_F^2 + \delta\|M\|_*^2\le 2 \delta\|M\|_*^2.\label{eq:firsteq}
    \end{align}
This completes the proof.
\end{proof}


\section{Proof of \Cref{thm:lowerbound}}\label{sec:lowerboundproof}
\begin{proof}[Proof of \Cref{thm:lowerbound}]
Here we view matrices in $\mathbb R^{d_0\times d_L}$ as $d_0d_L$ dimensional vectors, hence by rotating a matrix with an orthogonal transformation we mean to rotate the corresponding vector.
Note that the minimum $\ell_2$ solution of the regression problem is given by $\widetilde M$ defined as
\begin{align*}
    \widetilde M = \sum_{i=1}^n A_i\Big[\Big(\langle A_i, A_j\rangle\Big)^{-1}_{1\leq i,j\leq n}b\Big]_i.
\end{align*}
First, note that if we rotate the ground-truth matrix $M^*$ with an arbitrary orthogonal matrix $U$, then $\widetilde M$ rotates according to the same $U$. Combining this with the fact the distribution on the measurement matrices is Gaussian and rotationally symmetric, we conclude that the population loss $\gL'(\widetilde M)$ is the same for all $M^*$. Hence, to lower bound the population loss, we can further assume that the entries of $M^*$ are sampled from standard Gaussian distribution. Hence, for any $M^*$ we can write
\begin{align*}
     \mathbb E_{\{A_i\}_{i=1}^n} \gL'(\widetilde M)
    &=\mathbb E_{M^*} \mathbb E_{\{A_i\}_{i=1}^n}\gL'(\widetilde M) \\
    &=  \mathbb E_{\{A_i\}_{i=1}^n}\mathbb E_{M^*} \|\widetilde M - M^*\|_F^2\\
    &= \mathbb E_{\{A_i\}_{i=1}^n}(1 - \frac{n}{d_0d_L})\|M^*\|_F^2\\
    &= (1 - \frac{n}{d_0d_L})\|M^*\|_F^2.
\end{align*}
where we used the fact that $\widetilde M$ is the projection of $M^*$ onto the subspace spanned by $\{A_i\}_{i=1}^n$.
\end{proof}


\end{document}